\documentclass[twoside]{article}

\usepackage[accepted]{aistats2026}
\usepackage[round]{natbib}

\usepackage{amsmath}
\usepackage{amssymb}
\usepackage{mathtools}
\usepackage{amsthm}
\usepackage{physics}
\usepackage{mathrsfs}
\mathtoolsset{showonlyrefs}
\usepackage{enumerate}
\usepackage{lastpage}
\usepackage{float}
\usepackage{verbatim}
\usepackage{microtype}
\usepackage{graphicx}
\usepackage{subfigure}
\usepackage{booktabs} 
\usepackage{xcolor}

\newcommand{\fS}{\mathcal{S}}
\newcommand{\fA}{\mathcal{A}}
\newcommand{\fY}{\mathcal{Y}}

\newcommand{\fO}{\mathcal{O}}

\newcommand{\fV}{\mathcal{V}}

\newcommand{\R}{\mathbb{R}}

\newcommand{\E}{\mathbb{E}}

\newcommand{\ns}{{|\fS|}}
\newcommand{\na}{{|\fA|}}

\theoremstyle{plain}
\newtheorem{theorem}{Theorem}[section]

\newtheorem{lemma}[theorem]{Lemma}
\newtheorem{corollary}[theorem]{Corollary}
\theoremstyle{definition}
\newtheorem{definition}[theorem]{Definition}
\newtheorem{assumption}[theorem]{Assumption}
\theoremstyle{remark}

\begin{document}

%
\runningtitle{\runningtitle{Almost Sure Convergence of Differential Temporal Difference Learning}}

%
\runningauthor{Blaser, Wang, Zhang}

\twocolumn[

\aistatstitle{Almost Sure Convergence of Differential Temporal Difference Learning for Average Reward Markov Decision Processes}

\aistatsauthor{ Ethan Blaser \And Jiuqi Wang \And  Shangtong Zhang }

\aistatsaddress{University of Virginia \\ blaser@email.virginia.edu \And University of Virginia \\jiuqi@email.virginia.edu \And University of Virginia \\ shangtong@virginia.edu}]

\begin{abstract}
    The average reward is a fundamental performance metric in reinforcement learning (RL) focusing on the long-run performance of an agent. Differential temporal difference (TD) learning algorithms are a major advance for average reward RL as they provide an efficient online method to learn the value functions associated with the average reward in both on-policy and off-policy settings. However, existing convergence guarantees require a local clock in learning rates tied to state visit counts, which practitioners do not use and does not extend beyond tabular settings. We address this limitation by proving the almost sure convergence of on-policy $n$-step differential TD for any $n$ using standard diminishing learning rates without a local clock. We then derive three sufficient conditions under which off-policy $n$-step differential TD also converges without a local clock. 
    These results strengthen the theoretical foundations of differential TD and bring its convergence analysis closer to practical implementations.
\end{abstract}

\section{Introduction}
The average reward is an important performance metric in Reinforcement Learning (RL, \citet{sutton2018reinforcement}).
Compared with the commonly used discounted total rewards performance metric,
the average reward setting more heavily emphasizes the long-term behavior of the RL agent,
making it particularly suitable for applications like network resource allocation~\citep{marbach1998call, bakhshi2021r, yang2024average}, robotics \citep{kober2013reinforcement}, and scheduling \citep{ghavamzadeh2007hierarchical}.

Differential temporal difference (TD) \citep{wan2020learning} learning is one of the most important recent advances for average reward RL.
Differential TD is designed to estimate the corresponding value function for the average reward performance metric and can be used in both on-policy and off-policy settings.
However, the convergence analysis of differential TD remains less satisfactory. In \citet{wan2020learning}, almost sure convergence is proved only when the stepsizes depend on a local clock.
Specifically,
they require the learning rates of the form $\qty{\alpha_{\nu(t, S_t)}}$, where $\qty{\alpha_t}$ is a sequence of deterministic, nonnegative, and diminishing scalars and a local clock (i.e., a counter) $\nu(t, s)$,
which counts the number of visits to a state $s$ up to timestep $t$. 
In other words, at time $t$ the stepsize depends not only on $t$, but also on the number of past visits to the current state $S_t$.

We argue that this local clock based learning rate is unsatisfactory for at least three reasons.
First,
to our best knowledge,
practitioners do not actually use the local clock in their learning rates, including \citet{wan2020learning} in their experiments.
The local clock seems to be primarily a theoretically motivated technique \citep{borkar2009stochastic}. Although recent work demonstrates that it can occasionally be required for convergence in certain settings \citep{chen2025non}, its adoption in practical implementations remains rare.
Second,
the local clock cannot be used in many function approximation settings,
especially those considered in \citet{sutton2018reinforcement},
where the agent only has access to the feature of the current state, denoted as $\phi(S_t)$, not the state $S_t$ itself.
With only $\phi(S_t)$,
it is not clear how to count the visits to $S_t$ since the feature function $\phi$ is usually not a one-to-one mapping.
This means the local clock technique is only viable in the tabular setting.
Third,
although convergence analyses of discounted TD \citep{sutton1988learning} also require the local clock in learning rates \citep{jaakkola1993convergence,tsitsiklis1994asynchronous},
later works removed this requirement \citep{tsitsiklis1997analysis,liu2024ode}.
Therefore, there is a theoretical gap in the literature for average reward RL, and gives rise to the central question this work aims to answer:
\begin{center}
    \emph{Can we establish the convergence of differential TD without using a local clock in the learning rates?}
\end{center}
This question seems trivial at first glance.
After all, local clocks can be avoided in the discounted setting, so one might expect the same argument to carry over to the average reward setting. However, as we will now explain, extending that analysis to the average-reward case introduces several fundamental obstacles.

The convergence of the discounted TD with a local clock rests on the global asymptotic stability (G.A.S.) of the following ODE\footnote{The ODE~\eqref{eq ode 1} is G.A.S. if and only if the $A$ matrix is Hurwitz (Theorem 4.5 from \citet{khalil2002nonlinear}). A matrix $A$ is Hurwitz if the real part of any of its eigenvalues is strictly negative.}
\begin{align}
    \dv{v(t)}{t} = Av(t) \label{eq ode 1},
\end{align}
where $v(t) \in \R^\ns$ can be viewed as the estimation of the value function and $A \in \R^{\ns \times \ns}$ corresponds to the discounted TD algorithm,
with $\ns$ being the number of states.
Essentially,~\eqref{eq ode 1} is G.A.S. because the $A$ matrix corresponding to discounted TD with a local clock is negative definite (n.d.)\footnote{A matrix $A$, not necessarily symmetric, is n.d. if for any $y \neq 0$, it holds that $y^\top A y < 0$. A n.d. matrix must be Hurwitz. But a Hurwitz matrix does not need to be n.d. For example, $\mqty[-1 & 10 \\0 & -1]$ is Hurwitz but not n.d.}.
When the local clock is removed from the learning rates,
the corresponding ODE becomes
\begin{align}
    \dv{v(t)}{t} = DAv(t) \label{eq ode 2},
\end{align}
where $D \in \R^{\ns \times \ns}$ is a diagonal matrix whose entries are the stationary state distribution. 
The change from~\eqref{eq ode 1} to~\eqref{eq ode 2} is intuitive. With a local clock, the total magnitude of updates applied to each state is forced to be the same, regardless of how frequently that state is visited. Without a local clock, the magnitude of the updates naturally depends on visitation frequency, which appears as the multiplier $D$ in~\eqref{eq ode 2}. For instance, when a state $s$ is visited for the first time, the learning rate is always $\alpha_1$ with a local clock, whereas without it the learning rate may be $\alpha_{100}$ if $s$ is first visited at time $t=100$.
Nevertheless, when $A$ is n.d., it is straightforward to show that $DA$ is also n.d., implying that~\eqref{eq ode 2} is G.A.S., and thus that discounted TD converges even without a local clock.

However,
as we shall show soon,
the corresponding $A$ matrix for differential TD with the local clock is only Hurwitz and not necessarily n.d.
When $A$ is Hurwitz,
whether $DA$ is also Hurwitz is a long-standing open problem in the linear algebra community,
called the $D$-stability problem \citep{johnson1974sufficient,giorgi2015overview}.
Progress on the $D$-stability problem has been limited in the past decade \citep{kushel2019unifying,kushel2023novel, tong2024sufficient}.
As a result,
verifying whether~\eqref{eq ode 2} is G.A.S. for differential TD without a local clock is substantially more challenging than it initially appears.

Nevertheless,
this paper makes three contributions.
First,
we establish the almost sure convergence of on-policy $n$-step differential TD for any $n$ without the local clock.
Second,
we give three different sufficient conditions for the almost sure convergence of off-policy $n$-step differential TD without the local clock.
Admittedly,
our characterization in the off-policy case is incomplete and we correspondingly present our third contribution:
we outline a few challenges and open problems in this area.

\section{Background}\label{sec: background}

In this work, all vectors are column. The $\ell_2$ norm in $\R^d$ is denoted by $\norm{\cdot}$. The identity matrix is denoted by $I$, and we use $e$ to denote the all-one vector. For a matrix $A \in \R^{n\times n}$, we denote its spectral radius by $\lambda_{\max}(A) \doteq \max \qty{\abs{\lambda} : \lambda \in \sigma(A)}$, with $\sigma(A)$ as the set of eigenvalues of $A$. We say that a matrix $A$ is (strictly) \textit{positive stable} if $\forall \lambda \in \sigma(A)$ $\Re \lambda \geq 0$ ($\Re \lambda > 0$). 
It is easy to see that $A$ is strictly positive stable if and only if $-A$ is Hurwitz.\footnote{``Hurwitz'' is often used in the control community while ``positive stable'' is often used in the linear algebra community}
If a matrix $A$ has only nonnegative (positive) entries, we write $A \geq 0$, ($A>0$). If a matrix $A$ is positive definite, we write $A \succ 0$. Given any vector $x$, $\sum x$ denotes the sum of all elements in $x$. We use $A_{i,j}$ to refer to the $(i, j)$-th entry of $A$.
\begin{definition} \label{def: m matrix}
An $M-$matrix is a matrix of the form $\gamma I - P$ where $P \in \R^{n\times n}$, $P \geq 0$, and $\gamma \geq \lambda_{\max}(P)$. 
\end{definition}

In RL, we consider a Markov Decision Process (MDP; \citet{bellman1957markovian,puterman2014markov}) with a finite state space $\fS$,
a finite action space $\fA$,
a reward function $r: \fS \times \fA \to \R$,
a transition function $p: \fS \times \fS \times \fA \to [0, 1]$,
an initial distribution $p_0: \fS \to [0, 1]$.
At time step $0$,
an initial state $S_0$ is sampled from $p_0$.
At time $t$,
given the state $S_t$,
the agent samples an action $A_t \sim \pi(\cdot | S_t)$, 
where $\pi: \fA \times \fS \to [0, 1]$ is the policy being followed by the agent.
A reward $R_{t+1} \doteq r(S_t, A_t)$ is then emitted and the agent proceeds to a successor state $S_{t+1} \sim p(\cdot | S_t, A_t)$. 

We assume the Markov chain $\qty{S_t}$ induced by the policy $\pi$ is ergodic and thus adopts a unique stationary distribution $d_\pi$. We define $D_\pi = \text{diag}(d_\pi)$.
The average reward (a.k.a. gain, \citet{puterman2014markov}) is defined as $\bar{J}_{\pi} \doteq \lim_{T\rightarrow \infty} \frac{1}{T}\sum_{t=1}^T \E\left[R_t\right].$
Consequently,
the differential value function (a.k.a. bias, \citet{puterman2014markov}) is defined as $v_\pi(s) \doteq \lim_{T\to\infty} \frac{1}{T} \sum_{\tau=1}^T \E\left[ \sum_{i=1}^{\tau} (R_{t+i} - \bar{J}_{\pi}) \mid S_t = s\right].$
The corresponding Bellman equation (a.k.a. Poisson's equation) is then
\begin{align}
    \label{eq bellman equation}
    v = r_\pi -\bar{J}_{\pi}e + P_\pi v ,
\end{align} 
where $v \in \R^\ns$ is the free variable,
$r_\pi \in \R^\ns$ is the reward vector induced by the policy $\pi$, i.e., $r_\pi(s) \doteq \sum_a \pi(a|s) r(s, a)$, and $P_\pi \in \R^{\ns \times \ns}$ is the transition matrix induced by the policy $\pi$, i.e., $P_\pi(s, s') \doteq \sum_a \pi(a|s)p(s'|s, a)$. 
It is known \citep{puterman2014markov} that all solutions to~\eqref{eq bellman equation} form a set $\fV_* \doteq \qty{v_\pi + ce \mid c \in \R}.$
The policy evaluation problem in average reward MDPs is to estimate $v_\pi$,
perhaps up to a constant offset $ce$.

In the off-policy setting, an agent aims to evaluate a target policy $\pi$ but follows a behavior policy $\mu$. We define the importance sampling ratio $\rho(s,a) \doteq \frac{\pi(a|s)}{\mu(a|s)}$ and $\rho_t \doteq \rho(S_t,A_t)$.

\section{Differential Temporal Difference Learning}
\label{sec:avg reward td}  
Differential TD is designed to estimate $v_\pi$ in an online manner.
The differential TD algorithm proposed by \citet{wan2020learning} only considers a one-step look-ahead.
Inspired by the success of $n$-step TD in the discounted setting \citep{sutton2018reinforcement},
we first extend the 1-step differential TD to the $n$-step case.
As we shall see soon,
this extension is vital to the analysis in the off-policy setting.
Here we only present off-policy $n$-step differential TD as the on-policy version is just a special case with $\mu = \pi$.
Suppose a trajectory $\{S_0, A_0, R_1, S_1, \dots\}$ is generated by following a behavior policy $\mu$ as $A_t \sim \mu(\cdot \mid S_t)$.
Since the $n$-step return for $S_t$ is only observable after reaching $S_{t+n}$,
the iterates $\qty{v_t \in \R^\ns}, \qty{J_t \in \R }$ are updated at time $t+n$ as
\begin{align}
    &\delta_t = \textstyle R_{t+1:t+n} - nJ_{t+n-1} + v_{t+n-1}(S_{t+n}) - v_{t+n-1}(S_t), \\
    &J_{t+n} = \textstyle J_{t+n-1} + \frac{\eta}{n}\alpha_{t+n-1} \rho_{t:t+n-1}\delta_t, \\
    &v_{t+n}(S_t) = v_{t+n-1}(S_t) + \alpha_{t+n-1}\rho_{t:t+n-1}\delta_t,     \label{eq diff td full}
\end{align}  
where $\rho_{t:t+n-1} \doteq \prod_{k=t}^{t+n-1} \rho_k$ and $R_{t+1:t+n} \doteq \sum_{k=1}^n R_{t+k}$ are shorthands.

To our knowledge,
this is the first time that $n$-step differential TD is formalized, and the complete derivation is presented in Appendix \ref{ap: derivation}.
When $n=1$,
it recovers the $1$-step differential TD in \citet{wan2020learning}.
However,
in the convergence analysis in \citet{wan2020learning},
they replace the learning rate $\qty{\alpha_t}$ with $\qty{\alpha_{\nu(t, S_t)}}$.
We recall that $\nu(t, s)$ counts the number of visits to the state $s$ until time $t$ and is referred to as the local clock.
In this work,
we shall conduct our analysis of~\eqref{eq diff td full} directly without altering the learning rates.

Inspired by \citet{wan2020learning}, to facilitate our analysis,
we first rewrite~\eqref{eq diff td full} to eliminate the iterates $\{J_t\}$.
Define $\Sigma_t \doteq \sum_s v_t(s)$. Since the $n-$step return for $S_t$ is only available after time $t+n$, we adopt the standard convention that no updates occur before the first $n-$step return is observed, so $J_t$ and $v_t$ are constant for $t<n$. Making use of the fact that $v_{t+n}$ and $v_{t+n-1}$ differ from each other only for the $S_t$-indexed entry,
we obtain
\begin{align}
    J_{t+n} - J_{n-1} 
    &= \textstyle \sum_{i=0}^{t} \frac{\eta}{n}\alpha_{i+n-1}\rho_{i:i+n-1}\delta_i, \\
    &= \textstyle \sum_{i=0}^{t} \frac{\eta}{n}\qty(v_{i+n}(S_i) - v_{i+n-1}(S_i)), \\
    &= \textstyle \sum_{i=0}^{t} \frac{\eta}{n}\sum_s \qty(v_{i+n}(s) - v_{i+n-1}(s)), \\
    &= \textstyle \frac{\eta}{n}\,(\Sigma_{t+n} - \Sigma_{n-1}).
\end{align}
We can then rewrite $\delta_t$ from~\eqref{eq diff td full} as
\begin{align}
    \delta_t &= R_{t+1:t+n} - n\qty(J_{n-1} + \tfrac{\eta}{n}(\Sigma_{t+n-1} - \Sigma_{n-1})) 
               \\ &\quad + v_{t+n-1}(S_{t+n}) - v_{t+n-1}(S_t).
\end{align}
Then,~\eqref{eq diff td full} can be expressed more compactly as
\begin{align}
    &v_{t+n}(S_t) = v_{t+n-1}(S_t) + \alpha_{t+n-1}\rho_{t:t+n-1}\bigl(\textstyle
        \tilde R_{t+1:t+n} \\&\quad  \textstyle - \eta \Sigma_{t+n-1}
        + v_{t+n-1}(S_{t+n}) - v_{t+n-1}(S_t)\bigr),\label{eq: diff TD}
\end{align}
where $\tilde R_{t+1:t+n} \doteq \sum_{k=1}^n (R_{t+k} - J_{n-1} + \tfrac{\eta}{n}\Sigma_{n-1})$.
We assume initialization with $J_{n-1} \doteq 0$ and $v_{n-1} \doteq 0$ for simplifying presentation so that $\tilde R_{t+1:t+n} = R_{t+1:t+n}$.
For nonzero initialization of $J_0$ and $v_0$,
we only need to conduct the same analysis in a new MDP with a shifted reward function $\tilde{r}(s, a) \mapsto r(s, a) - J_{n-1} + \frac{\eta}{n} \Sigma_{n-1}$ \citep{wan2020learning}.

\section{Convergence of Differential Temporal Difference Learning}

We make the following standard assumptions.

\begin{assumption}[Ergodicity and coverage]\label{as: markov}
The Markov chains induced by the behavior policy $\mu$ and target policy $\pi$ are finite, irreducible, and aperiodic. The behavior policy $\mu$ covers $\pi$ i.e. $\forall\,s\in\mathcal{S},\ \forall\,a\in\mathcal{A}:\quad \pi(a\mid s)>0\implies\mu(a\mid s)>0.$
\end{assumption}
The ergodicity assumption is standard for analyzing RL algorithms \citep{bertsekas1996neuro}. Furthermore, the coverage assumption is the same as in \citet{sutton2018reinforcement}. 

From Assumption \ref{as: markov}, the Markov chains induced by the behavior policy and target policy each adopt a unique stationary distribution, which we denote respectively as $d_\pi$ and $d_\mu$. Because the Markov chains are irreducible, $d_\pi > 0$ and $d_\mu >0$ \citep{puterman2014markov}.

\begin{assumption}\label{as: lr}
    The learning rates $\qty{\alpha_t}$ are positive, decreasing, and satisfy
    \begin{align}
        \textstyle \sum_{t=0}^\infty \alpha_t = \infty, \lim_{t \rightarrow \infty} \alpha_t = 0, \, \text{and} \, \frac{\alpha_t - \alpha_{t+1}}{\alpha_t} = \fO(\alpha_t).
    \end{align}
\end{assumption}
This is the standard set of assumptions for learning rates in stochastic approximation \citep{borkar2009stochastic}. We emphasize that this definition of the learning rates is far more widely used compared to the state visitation-dependent learning rates found in \citet{wan2020learning}. For example, Assumption \ref{as: lr} is satisfied by any learning rate of the form $\alpha_t = \frac{C_1}{(n + C_2)^\beta}$
where $C_1$ and $C_2$ are constants and $\beta \in (0.5,1]$.

Our proof of convergence will utilize some results from the stochastic approximation (SA) community. Thus, we begin by writing the update \eqref{eq: diff TD} as a canonical stochastic approximation update by first defining an augmented Markov chain $\qty{Y_t}$ evolving in a finite state space $\fY$ as,
\begin{align} \label{eq: markov auxiliary}
    Y_{t+1} = \qty(S_t, A_t, S_{t+1}, A_{t+1}, \dots, A_{t+n-1}, S_{t+n}).
\end{align}
From Assumption \ref{as: markov}, it is clear that $\qty{Y_{t}}$ is also irreducible and aperiodic, and we denote its stationary distribution as $d_\fY$.
Then, we can define the operator $H: \R^{\abs{\fS}} \times \fY \to \R^{\abs{\fS}}$ as,
\begin{align}
    H\qty(v,y)[s] &\doteq \textstyle  \rho_{0:n-1}(y)\bigl(\sum_{k=0}^{n-1}r(s_k,a_k)
    -\eta \sum v  \\ 
    &\quad +v(s_n)-v(s_0)\bigr)\mathbb{I}\qty{s=s_0}. \label{eq: H def}
\end{align}
where we have $y \doteq (s_0, a_0, \dots s_n)$ and $\rho_{0:n-1}(y) \doteq \prod_{k=0}^{n-1} \rho(s_k, a_k)$.
Then we can write \eqref{eq: diff TD} as a canonical SA algorithm according to
\begin{align}
    v_{t+1} = v_t + \alpha_t H(v_t, Y_{t+1}). \label{eq: SA}
\end{align}
Note that the SA iteration index $t$ differs from the environment time step in \eqref{eq: diff TD}. 
One SA update corresponds to an $n$-step block of experience, so SA step $t$ corresponds to environment time $t+n-1$, since the tuple $Y_{t+1}$ becomes available only after observing up to $S_{t+n}$.

One prominent method for analyzing the asymptotic behavior of $\qty{v_t}$ is to regard $\qty{v_t}$ as Euler's discretization of the ODE
\begin{align}
\dv{v(t)}{t} &= h(v(t)) \label{eq: ode method}
\end{align}
where the expected operator $h(v) \doteq \E_{y \sim d_{\fY}}\qty[H(v,y)]$. Using this method, the asymptotic behavior of the discrete and stochastic updates $\qty{v_t}$ can be characterized by the continuous and deterministic trajectories of the ODE \eqref{eq: ode method}, if the stability of the iterates can be established. The Borkar-Meyn theorem \citep{borkar2000ode} establishes the desired stability given the ODE@$\infty$ is G.A.S., which is defined as
\begin{align}
\dv{v(t)}{t} &= h_\infty(v(t)),
\end{align}
where $h_\infty \doteq \lim_{c\rightarrow \infty} \frac{h(cv)}{c}$. Although the original work of \citet{borkar2000ode} only allows for $\qty{Y_t}$ to be i.i.d, recently \citet{liu2024ode} generalized the Borkar-Meyn theorem to Markovian noise $\qty{Y_t}$ under equally mild assumptions, an important extension which we will leverage here since our $\qty{Y_t}$ in \eqref{eq: markov auxiliary} is a Markov chain.

We therefore proceed by studying the expected operator for \eqref{eq: diff TD}:
\begin{align}
    h(v)[s] &=\E_{y\sim d_{\fY}}\qty[H(v,y)[s]] \\
    &= \textstyle \E_{\mu} \Bigl[
          \rho_{0:n-1}(y)\bigl(\sum_{k=0}^{n-1} r\qty(s_k, a_k)
           -\eta \sum v \\
    &\quad \quad +v\qty(s_n) -v(s)\bigr)\mathbb{I}\qty{s = s_0}\Bigr]\\
    &= \textstyle d_{\mu}(s) \E_\mu \Bigr[\rho_{0:n-1}(y) \Bigl(\sum_{k=0}^{n-1} r\qty(s_k, a_k)\\
    &\textstyle \quad \quad -\eta \sum v + v\qty(s_n) -v(s) \Bigr) \Big| s_0 =s \Bigl],
    \end{align}
where we have used $\E_{\mu}$ to abbreviate the expectation over the trajectory generated by $a_k \sim \mu(\cdot|s_k)$ and $s _{k+1} \sim p(\cdot| s_{k}, a_k)$.
Isolating the reward-sum term, we define the expected $n$-step reward by
\begin{align}
r^{(n)}(s)
&\doteq \textstyle \E_{\mu} \qty[
\rho_{0:n-1}\sum_{k=0}^{n-1}r\qty(s_k, a_k)
\Big|s=s_0 ] \\
&= \textstyle \E_{\pi} \qty[\sum_{k=0}^{n-1}r\qty(s_k, a_k) \Big|s=s_0 ] \\
&= \textstyle \sum_{k=0}^{n-1} \qty(P_{\pi}^k r_{\pi})(s).
\end{align}
Therefore, the expected operator for \eqref{eq: diff TD} is
    \begin{align}
        h(v) &= D_{\mu} (r^{\qty(n)} - \eta e e^\top v + P_{\pi}^n v - v), \label{eq: h}
    \end{align}
where $D_{\mu}$ is a diagonal matrix whose entries are $d_\mu$. 
The corresponding ODE@$\infty$ is
\begin{align}
\dv{v(t)}{t} &= h_\infty(v(t)) = D_\mu\qty(P_\pi^n - I - \eta ee^\top) v(t) = -A v(t), \label{eq: A definition}
\end{align}
where $A \doteq D_\mu\qty(I - P_\pi^n + \eta e e^\top)$. 

We now outline the structure of our proof.
The first milestone is to prove that 
the ODE~\eqref{eq: A definition} is G.A.S.
It is then trivial to see that~\eqref{eq: ode method} is also G.A.S..
We will use $v_\infty$ to denote the G.A.S. equilibrium of~\eqref{eq: ode method} and we have $h(v_\infty) = 0$.
This means that $r^{\qty(n)} - \eta e e^\top v_\infty + P_{\pi}^n v_\infty - v_\infty = 0$.
The analysis in Appendix B.2.1 of \citet{wan2020learning}, which we omit to avoid redundancy, then immediately confirms that $v_\infty \in \fV_*$.
The second milestone is to invoke a result from \citet{liu2024ode} (stated as Lemma \ref{lem: liu ode convergence} in the Appendix) to prove that the iterates $\qty{v_t}$ generated in \eqref{eq: diff TD} converge to $v_\infty$ almost surely. We now proceed to carry out this proof strategy in detail.

It is known that a necessary and sufficient condition for \eqref{eq: A definition} to be G.A.S. is that $A$ is strictly positive stable (Theorem 4.5 from \citet{khalil2002nonlinear}). 
\citet{wan2020learning} essentially prove that the matrix $I - P_\pi^n + \eta ee^\top$ is strictly positive stable.
However,
this does not mean that the $A$ matrix is strictly positive stable.
This is an instance of the $D$-stability problem.
As discussed in Section~\ref{sec: related},
this is a very challenging problem in the linear algebra community. 
Nevertheless,
to prove $A$ is strictly positive stable, we will utilize the results from \citet{bierkens2014singular}, which we present as Lemma \ref{thm: bierkens}, that establish conditions under which $M$-matrices (see Definition \ref{def: m matrix}) are strictly positive stable under rank one perturbations.


\newcounter{c}

\begin{lemma}{(Theorem 2.7 from \citet{bierkens2014singular})}. \label{thm: bierkens}
    Let $B \in \R^{n\times n}$ and $v, w \in \R^{n}$. Then $B + vw^\top$ is strictly positive stable if:
    \begin{enumerate}
        \item $B = \lambda_{\max}(K)I - K $ is a singular $M$-matrix where $K \in \R^{n \times n}$.
        \item $0$ is a geometrically simple eigenvalue of $B$ with left and right eigenvectors $z_l \neq 0$ and $z_r \neq 0$. (i.e. $z_l^\top B = 0$ and $B z_r =0$).
        \item $(z_l ^\top v)(w^\top z_r) \neq 0$
        \setcounter{c}{\value{enumi}}
    \end{enumerate}
    and \textbf{either} of the following conditions hold:
    \begin{enumerate}
        \setcounter{enumi}{\value{c}}
        \item $Bv = 0$, or $w^\top B = 0$. 
        \item $v,w > 0$ and $2K_{i,j} \geq v_i w_j \, \forall \, i,j$ (where $v_i$, respectively, $w_j$ denote the $i$-th entry of $v$ and the $j$-th entry of $w$)
    \end{enumerate}
\end{lemma}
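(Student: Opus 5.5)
The plan is to use one spectral fact about $M$-matrices to push all eigenvalues into the closed right half-plane, then exclude the imaginary axis by a separate argument in each case. Write $\rho \doteq \lambda_{\max}(K)$, so $B=\rho I-K$ with $K\ge 0$ by Definition~\ref{def: m matrix}.

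\textbf{Step 1 (where eigenvalues can sit).} If $\mu$ is an eigenvalue of a matrix $K'$ with $\lambda_{\max}(K')\le\rho$, then $\rho-\mu$ satisfies $\Re(\rho-\mu)\ge 0$. Equality forces $\Re\mu=\rho=|\mu|$, hence $\mu=\rho$. So an eigenvalue of $\rho I-K'$ that lies on the imaginary axis must be exactly $0$. Applied to $K'=K$, the only imaginary-axis eigenvalue of $B$ is $0$.

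\textbf{Step 2 ($B+vw^\top$ is nonsingular).} This step serves both cases. By the matrix determinant lemma in adjugate form,
\begin{align}
\det(B+vw^\top)=\det B + w^\top \operatorname{adj}(B)\, v = w^\top \operatorname{adj}(B)\, v .
\end{align}
Since $0$ is geometrically simple, $\operatorname{rank}B=n-1$. Hence $\operatorname{adj}(B)$ has rank one, and the identities $B\operatorname{adj}(B)=\operatorname{adj}(B)B=0$ force $\operatorname{adj}(B)=c\,z_r z_l^\top$ with $c\neq 0$. Condition 3 then gives $\det(B+vw^\top)=c\,(w^\top z_r)(z_l^\top v)\neq 0$.

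\textbf{Case 5.} Write $B+vw^\top=\rho I-K'$ with $K'\doteq K-vw^\top$. The hypotheses $0< v_iw_j\le 2K_{i,j}$ give $|K'_{i,j}|\le K_{i,j}$ entrywise. By the standard Perron--Frobenius comparison, $\lambda_{\max}(K')\le\lambda_{\max}(|K'|)\le\lambda_{\max}(K)=\rho$. Step 1 then says the only possible imaginary-axis eigenvalue of $B+vw^\top$ is $0$, and Step 2 excludes it. So $B+vw^\top$ is strictly positive stable.

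\textbf{Case 4.} Take the subcase $Bv=0$; the subcase $w^\top B=0$ is the transpose argument. Geometric simplicity gives $v=\beta z_r$, and condition 3 gives $\beta\neq0$ and $z_l^\top z_r\neq 0$.
\begin{enumerate}
\item Since $z_l^\top z_r\neq0$, there is no Jordan chain at $0$, so $0$ is an algebraically simple eigenvalue of $B$. All other eigenvalues of $B$ then have positive real part by Step 1.
\item Using the resolvent identity $(\lambda I-B)^{-1}v=v/\lambda$, the determinant lemma gives
\begin{align}
\det(\lambda I-B-vw^\top)=\det(\lambda I-B)\Bigl(1-\tfrac{w^\top v}{\lambda}\Bigr)=\tfrac{\det(\lambda I-B)}{\lambda}\,(\lambda-w^\top v).
\end{align}
\item Therefore the spectrum of $B+vw^\top$ is that of $B$ with its single zero replaced by $w^\top v$.
\end{enumerate}

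\textbf{Main obstacle.} The remaining task in Case 4 is to show $\Re(w^\top v)>0$. Step 2 gives only $w^\top v\neq 0$; the sign is the delicate point. First, I would use Perron--Frobenius to take $z_l,z_r\ge 0$, which holds strictly when $K$ is irreducible, as for $I-P_\pi^n$ here. Second, I would try to read off positivity of $w^\top v=\beta\, w^\top z_r$ from the sign conventions in condition 3.

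If the lemma as stated does not force this sign, the fallback is to verify $w^\top v>0$ directly in our application. There $v=D_\mu e$ or $e$, and $w=\eta e$ with $\eta>0$, so positivity is immediate. I expect pinning down this sign, rather than the spectral-radius comparison in Case 5, to be the step requiring the most care.
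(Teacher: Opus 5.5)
The paper does not prove this lemma; it imports it from Bierkens and Ran, so your argument has to stand on its own. Most of it does. Step 1 is right. Step 2, the adjugate form of the determinant lemma with $\mathrm{adj}(B)=c\,z_r z_l^\top$, correctly rules out the eigenvalue $0$ via Condition 3. Case 5 is complete: $|K-vw^\top|\le K$ entrywise gives $\lambda_{\max}(K-vw^\top)\le\lambda_{\max}(K)$. In Case 4, the Jordan-chain argument for algebraic simplicity and the rank-one spectrum shift are correct: the spectrum of $B+vw^\top$ is that of $B$ with its zero replaced by $w^\top v$.

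The gap is where you suspected, but it cannot be closed the way you plan. Condition 3 is invariant under $v\mapsto -v$, so no sign of $w^\top v$ can be read off from it. In fact the lemma as restated in the paper is false in Case 4. Take $B=I-P$ with $P$ irreducible and stochastic, $v=-e$, $w=e$. Conditions 1--3 hold with $z_r=e$ and $z_l$ the stationary distribution of $P$, so $(z_l^\top v)(w^\top z_r)=-n\neq 0$, and $Bv=0$. Yet $(B+vw^\top)e=-n\,e$, so $-n$ is an eigenvalue.

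The missing hypothesis is the nonnegativity $v,w\ge 0$ from Bierkens and Ran's setting of nonnegative rank-one perturbations. The paper's own restatement of their Lemma 2.11 keeps it. With it, no Perron--Frobenius sign-fixing is needed. You have $w^\top v=\sum_i v_i w_i\ge 0$. You also already showed $w^\top v\neq 0$: it equals $\beta\, w^\top z_r$ with both factors nonzero, or $\gamma\, z_l^\top v$ in the transposed subcase $w=\gamma z_l$. Hence $w^\top v>0$ and Case 4 is done.

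Your fallback is also valid for every use of the lemma in the paper. With the paper's labelling $v=\eta d_\mu$, $w=e$, you get $w^\top v=\eta\, e^\top d_\mu=\eta>0$.
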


To utilize Lemma \ref{thm: bierkens} to prove the strict positive stability of $A$ from \eqref{eq: A definition}, we begin by decomposing $A$ into the form of $B+vw^\top$ with,
\begin{align}
A &= D_\mu\bigl(I - P_\pi^n + \eta\,e\,e^\top\bigr)\\
  &= I - I + D_\mu(I - P_\pi^n) + \eta\,d_\mu\,e^\top\\
  &= I - \qty(I + D_\mu(P_\pi^n - I)) + \eta d_\mu e^\top \\
  &\doteq B + \eta d_\mu e^\top, \label{eq: B definition}
\end{align}
where $B \doteq I - \qty(I + D_\mu(P_\pi^n - I))$, and we recall that $\eta$ is a positive constant.

Without any additional assumptions, we can verify the first three conditions of Lemma \ref{thm: bierkens} in the following Lemma.
\begin{lemma}\label{lem: first three conds}
    Let Assumption \ref{as: markov} hold. Then, $B \doteq I - \qty(I + D_\mu(P_\pi^n - I))$, $v \doteq \eta d_\mu$, and $w \doteq e$ satisfy conditions 1-3 of Lemma \ref{thm: bierkens}.
\end{lemma}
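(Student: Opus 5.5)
We verify the three conditions in order, taking $K \doteq I + D_\mu(P_\pi^n - I) = I - D_\mu + D_\mu P_\pi^n$, so that $B = I - K$.

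\textbf{Condition 1.} First, $K \geq 0$ entrywise. The off-diagonal entries are $d_\mu(s)P_\pi^n(s,s') \geq 0$. The diagonal entries are $1 - d_\mu(s) + d_\mu(s)P_\pi^n(s,s) \geq 0$, since $d_\mu(s) \le 1$. Next, $K$ is row stochastic, because $Ke = e - d_\mu + D_\mu P_\pi^n e = e$. Hence $\lambda_{\max}(K) = 1$, with eigenvector $e$ (Perron--Frobenius; the spectral radius of a stochastic matrix is $1$). Therefore $B = \lambda_{\max}(K)I - K$ is an $M$-matrix in the sense of Definition~\ref{def: m matrix}, and it is singular because $Be = 0$.

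\textbf{Condition 2.} We have $B = D_\mu(I - P_\pi^n)$, and $D_\mu$ is invertible since $d_\mu > 0$. So $Bz = 0$ iff $P_\pi^n z = z$.

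Under Assumption~\ref{as: markov}, the chain $P_\pi$ is irreducible and aperiodic, so $P_\pi^n$ is also irreducible and aperiodic. Indeed, $P_\pi^m > 0$ for all large $m$, so $(P_\pi^n)^k > 0$ for large $k$, i.e., $P_\pi^n$ is primitive. By Perron--Frobenius, the eigenvalue $1$ of $P_\pi^n$ is simple, with right eigenvector $e$ and left eigenvector $d_\pi$ (since $d_\pi^\top P_\pi^n = d_\pi^\top$).

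It follows that the null space of $B$ is spanned by $z_r = e$. For the left null space, $z_l^\top B = 0$ iff $(D_\mu z_l)^\top(I - P_\pi^n) = 0$, so $D_\mu z_l \propto d_\pi$, giving $z_l = D_\mu^{-1} d_\pi$. This $z_l$ is nonzero and entrywise positive. Thus $0$ is a geometrically simple eigenvalue of $B$. (It is in fact algebraically simple as well, since $1$ is algebraically simple for the primitive matrix $P_\pi^n$ and $D_\mu$ is invertible. Only geometric simplicity is needed here.)

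\textbf{Condition 3.} With $v = \eta d_\mu$ and $w = e$,
\[
z_l^\top v = \eta\, d_\pi^\top D_\mu^{-1} d_\mu = \eta\, d_\pi^\top e = \eta > 0,
\qquad
w^\top z_r = e^\top e = |\fS| > 0 .
\]
Their product is nonzero.

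The only delicate step is Condition 2: one must justify that the $n$-step matrix $P_\pi^n$ remains irreducible for every $n$. This is where aperiodicity is used, via primitivity; without it, $P_\pi^n$ could split into several classes and the eigenvalue $0$ of $B$ would not be simple.
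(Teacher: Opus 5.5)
Your proof is correct. It follows the paper's skeleton: a nonnegative row-stochastic $K$ for Condition 1, Perron--Frobenius for Condition 2, and positivity for Condition 3. Your explicit computation $(z_l^\top v)(w^\top z_r)=\eta\,|\fS|$ is a cleaner version of the paper's sign argument.

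\textbf{Condition 2, different route.} The paper argues that $K=(I-D_\mu)+D_\mu P_\pi^n$ inherits irreducibility from $P_\pi^n$, applies Perron--Frobenius to $K$, and then checks $e$ and $d_\pi/d_\mu$ by substitution. You instead factor $B=D_\mu(I-P_\pi^n)$ and use invertibility of $D_\mu$ to reduce $\ker B$ and $\ker B^\top$ to the eigenvalue-$1$ eigenspaces of $P_\pi^n$ and $P_\pi^{n\top}$. You then apply Perron--Frobenius to the primitive matrix $P_\pi^n$. Your route avoids the zero-pattern argument for $K$ and derives $z_l=D_\mu^{-1}d_\pi$ rather than guessing it. The paper's route has one advantage: Perron--Frobenius on the irreducible $K$ gives algebraic simplicity of $1$ for $K$, hence of $0$ for $B=I-K$, for free. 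The paper reuses this in Lemma~\ref{lem: exists eta0 bound}.

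\textbf{Your parenthetical claim.} The reason you give for algebraic simplicity does not suffice. Left multiplication by an invertible, even positive, diagonal matrix does not preserve algebraic simplicity in general. For example, $N=\begin{pmatrix}1 & -1/2\\ 1 & -1/2\end{pmatrix}$ has $0$ as an algebraically simple eigenvalue, but $\mathrm{diag}(1,2)N$ is a nonzero nilpotent matrix.

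The conclusion is still true here, for a different reason. $0$ is geometrically simple, and $z_l^\top z_r=\sum_s d_\pi(s)/d_\mu(s)>0$, which rules out a Jordan chain of length $\ge 2$. Alternatively, apply Perron--Frobenius directly to $K$. So the reason is positivity, not mere invertibility. You rightly note that only geometric simplicity is needed for this lemma, so the remark does not affect your proof.
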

\begin{proof}
    First, we verify Condition 1. If we define $K \doteq I + D_\mu(P_\pi^n - I)$, we have $B = I - K$. Therefore, it is sufficient to prove that $\lambda_{\max}\qty(K) = 1$.
Since $D_\mu=\text{diag}(d_\mu)$ with $d_\mu > 0$ and $P_\pi^n$ is non‐negative, it is easy to see that 
\begin{align}
K \doteq I + D_\mu(P_\pi^n - I) = (I - D_\mu) + D_\mu P_\pi^n \label{eq: K def}
\end{align}
is non-negative. Additionally, computing the row sums of $K$, we see that it is row-stochastic:
\begin{align}
K e &= (I-D_\mu)e+ D_\mu P_\pi^n e =e - D_\mu e + D_\mu e = e, \label{eq: K stochastic}
\end{align}
where the second equality holds because the transition matrix $P_\pi^n$ is row-stochastic.
Then, we are guaranteed that $\lambda_{\max}(K) = 1$ (Theorem 8.1.22 from \citet{horn2012matrix}).

To verify Condition 2, we demonstrate that $B$ has $0$ as an algebraically (and therefore geometrically) simple eigenvalue with left and right eigenvectors $z_l, z_r \neq 0$ (i.e. $z_l ^\top B = 0$ and $B z_r = 0$). We have
\begin{align}
\ker B &= \ker\qty(I - K)
= \qty{z_r: K z_r = z_r}. \label{eq: right kernel}
\end{align}
Additionally, since
$B^\top=(I-K)^\top$
we have
\begin{align}
\ker B^\top
=\ker\qty(I-K)^\top
=\qty{z_l : z_l^\top K = z_l^\top}.
\end{align}
Since $P_\pi$ is irreducible and aperiodic by Assumption \ref{as: markov}, $P_\pi^n$ is irreducible for every $n\geq1$ \citep{puterman2014markov}. Multiplying a positive diagonal matrix and adding another positive diagonal matrix leaves the zero pattern unchanged so $K = I - D_\mu + D_\mu P_\pi^n$ is also irreducible. Therefore, the Perron-Frobenius theorem (Theorem 8.4.4 in \citet{horn2012matrix}) guarantees that one is an algebraically simple eigenvalue of $K$. This implies that $\ker B$ and $\ker B^{\top}$ are one‑dimensional, and thus zero is a geometrically simple eigenvalue of $B$. We identify the one-dimensional left and right kernels of $B$ as,
\begin{align}
    \ker{B} = \text{span}(e), \quad \ker{B^\top} = \text{span}\qty(d_\pi / d_\mu),
\end{align}
where $d_\pi / d_\mu$ represents element-wise division. For the right kernel, it holds trivially from \eqref{eq: right kernel} and the fact that $K$ is row-stochastic. For the left kernel, with $B^\top = I- K^\top = D_\mu - P_\pi^{n\top} D_\mu$ we have,
\begin{align}
    \textstyle B^\top \qty(\frac{d_\pi}{d_\mu}) = D_\mu \qty(\frac{d_\pi}{d_\mu}) - P_\pi^{n\top} D_\mu \qty(\frac{d_\pi}{d_\mu}) = d_\pi - P_\pi^{n\top} d_\pi = 0.
\end{align}
Clearly $z_l = d_\pi / d_\mu$ and $z_r = e$ are both nonzero, so Condition 2 is satisfied. 

To verify Condition 3, we note that all components of $e$ and $\frac{d_\pi}{d_\mu}$ are strictly positive, so any non-zero vector $z_l^\top \in \text{span}(\frac{d_\pi}{d_\mu})$ and $z_r \in \text{span}(e)$ will have uniform sign. Using the fact that $v = \eta d_\mu$ and $w=e$ are strictly positive, it is easy to see that
\begin{equation}
    (z_l^\top v)(w^\top z_r) = \eta(z_l^\top d_\mu)(e^\top z_r) \neq 0.
\end{equation}
\end{proof}

Although we have verified Conditions 1-3 of Lemma \ref{thm: bierkens} using only Assumption \ref{as: markov}, we still need either Condition 4 or 5 to establish that $A$ is strictly positive stable. We therefore split the analysis into two regimes. In the on-policy case with $\mu=\pi$ (Section \ref{sec: on policy}), we are able to directly satisfy Condition 4. In the off-policy case (Section \ref{sec: off policy}), 
additional restrictions are needed, and we provide three sufficient conditions.

\subsection{On-Policy Case} \label{sec: on policy}
In the on-policy case, we consider the following assumption.
\begin{assumption}[On-policy] \label{as: on policy} The behavior policy followed by the agent is the target policy, i.e., $\pi(a|s) = \mu(a|s) \, \forall s \in \fS, a\in \fA$. 
\end{assumption} 

To prove the strict positive stability of $A$ in the on-policy case, since Conditions 1-3 are already in place from Lemma \ref{lem: first three conds}, it remains only to verify Condition 4. Theorem \ref{thm: on policy stability} does so, thereby establishing the strict positive stability of $A$. Corollary \ref{corr: on policy convergence} then gives the almost-sure convergence of Differential TD.

\begin{theorem} \label{thm: on policy stability}
    Let Assumptions \ref{as: markov} and \ref{as: on policy} hold. Then, $A = B + \eta d_\mu e^\top$ is strictly positive stable for any $n\geq 1$ and any $\eta > 0$.
\end{theorem}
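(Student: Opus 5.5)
Conditions 1--3 of Lemma~\ref{thm: bierkens} already hold for $B \doteq I - K$ with $K = (I - D_\mu) + D_\mu P_\pi^n$, $v \doteq \eta d_\mu$ and $w \doteq e$, by Lemma~\ref{lem: first three conds}. So the plan is to check Condition 4 of Lemma~\ref{thm: bierkens}. The alternative $Bv=0$ fails in general, because $\ker B = \mathrm{span}(e)$ and $d_\mu$ is not a multiple of $e$ unless the stationary distribution is uniform. I will therefore aim for the other alternative, $w^\top B = e^\top B = 0$, using on-policy stationarity.

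Under Assumption~\ref{as: on policy}, $\mu=\pi$, so $D_\mu = D_\pi$ and $d_\mu = d_\pi$. From the proof of Lemma~\ref{lem: first three conds}, the left kernel of $B$ is $\mathrm{span}(d_\pi/d_\mu)$. In the on-policy case the ratio $d_\pi/d_\mu$ is entrywise equal to one, so the left kernel is $\mathrm{span}(e)$. As a direct check, $B = D_\pi(I - P_\pi^n)$, and therefore
\begin{align}
e^\top B = d_\pi^\top (I - P_\pi^n) = d_\pi^\top - d_\pi^\top P_\pi^n = 0,
\end{align}
since $d_\pi^\top P_\pi = d_\pi^\top$ implies $d_\pi^\top P_\pi^n = d_\pi^\top$ for every $n \ge 1$. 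This gives Condition 4 with $w = e$.

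With Conditions 1--4 in place, Lemma~\ref{thm: bierkens} shows that $B + vw^\top = B + \eta d_\mu e^\top = A$ is strictly positive stable. The identity $A = B + \eta d_\mu e^\top$ is \eqref{eq: B definition}. No condition on $n$ or $\eta$ enters: Condition 3 only needs $\eta \neq 0$, and $\eta>0$ is assumed. I expect no real obstacle here. The one point needing care is choosing the right alternative in Condition 4, namely $w^\top B=0$ rather than $Bv=0$, and noting that this is exactly where the on-policy assumption is used.
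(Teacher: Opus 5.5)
Your proposal is correct and follows the paper's proof: Conditions 1--3 come from Lemma~\ref{lem: first three conds}, and Condition 4 is checked through $w^\top B = e^\top D_\pi(I - P_\pi^n) = d_\pi^\top(I - P_\pi^n) = 0$, using the stationarity of $d_\pi$. Your extra remarks are accurate but not needed for the argument, namely that $Bv = 0$ fails unless $d_\mu$ is uniform and that the left kernel becomes $\mathrm{span}(e)$ when $\mu = \pi$.
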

\begin{proof}
Recall from \eqref{eq: B definition}, we have expressed $A$ in the form of $B+vw^\top$ where $B \doteq I - \qty(I + D_\mu(P_\pi^n - I))$, $v= \eta d_\mu$ and $w = e$. Lemma \ref{thm: bierkens} states that $A$ is strictly positive-stable once Conditions 1-3 together with either Condition 4 or 5, are satisfied.
In Lemma \ref{lem: first three conds} we verify the first three conditions of Lemma \ref{thm: bierkens} with this choice of $B, v, w$. In the on-policy case (Assumption \ref{as: on policy}), we have $\mu = \pi$, which further gives
\begin{align}
    B = I - \qty(I + D_\pi(P_\pi^n - I)), \quad v = \eta d_\pi, \quad w=e.
\end{align}

To demonstrate Condition 4 holds in the on-policy setting, we show $w^\top B = 0$ with,
\begin{align}
w^\top B = e^\top D_\pi(I-P_\pi^n)
  = d_\pi^\top(I-P_\pi^n)
  = 0. 
\end{align} 
\end{proof}

\begin{corollary}\label{corr: on policy convergence}
    Let Assumptions \ref{as: markov}, \ref{as: lr} and \ref{as: on policy} hold. Then the iterates $\qty{v_t}$ in \eqref{eq: diff TD} satisfy: $\lim_{t \rightarrow \infty} v_t = v_\infty ,\ a.s.    $, where $v_\infty \in \fV_*$.
\end{corollary}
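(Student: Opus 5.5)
The plan is to apply the Markovian Borkar--Meyn result of \citet{liu2024ode} (Lemma \ref{lem: liu ode convergence} in the Appendix) to the canonical SA form \eqref{eq: SA}. That lemma delivers both stability and convergence once its hypotheses are checked. After that, we identify the limit as an element of $\fV_*$. Throughout we take $\mu=\pi$, so the operator is $h(v) = D_\pi(r^{(n)} - \eta ee^\top v + P_\pi^n v - v)$ and $A = D_\pi(I - P_\pi^n + \eta ee^\top)$.

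\textbf{Step 1: hypotheses of Lemma \ref{lem: liu ode convergence}.}
\begin{itemize}
\item[(i)] $\qty{Y_t}$ is a finite, irreducible, aperiodic Markov chain with stationary distribution $d_\fY$; this follows from Assumption \ref{as: markov}.
\item[(ii)] $H(v,y)$ is affine in $v$, with coefficients bounded uniformly in $y$ because $\fY$ is finite and $\rho\equiv 1$ on-policy. Hence $H$ is Lipschitz in $v$ uniformly in $y$, and $\norm{H(0,y)}$ is bounded.
\item[(iii)] The learning rates satisfy Assumption \ref{as: lr}.
\item[(iv)] The limit $h_\infty(v)=\lim_{c\to\infty}h(cv)/c = -Av$ exists, and the convergence is uniform on compacts since $h$ is affine.
\item[(v)] The ODE@$\infty$ $\dot v = -Av$ is G.A.S. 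This holds because Theorem \ref{thm: on policy stability} gives that $A$ is strictly positive stable, so $-A$ is Hurwitz, and we then apply Theorem 4.5 of \citet{khalil2002nonlinear}.
\end{itemize}

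\textbf{Step 2: the ODE \eqref{eq: ode method} itself.} Since $h(v) = D_\pi r^{(n)} - Av$ and $A$ is nonsingular (no zero eigenvalue), there is a unique equilibrium $v_\infty = A^{-1}D_\pi r^{(n)}$. Writing $\dot{(v-v_\infty)} = -A(v-v_\infty)$ shows that $v_\infty$ is G.A.S. The lemma then yields $v_t\to v_\infty$ almost surely.

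\textbf{Step 3: $v_\infty\in\fV_*$.} From $h(v_\infty)=0$ and $D_\pi\succ 0$ we get
\[
r^{(n)} - \eta (e^\top v_\infty) e + P_\pi^n v_\infty - v_\infty = 0 .
\]
Write $r^{(n)} = \sum_{k<n}P_\pi^k r_\pi$. By the argument of Appendix B.2.1 of \citet{wan2020learning} adapted to $n$ steps, left-multiplying by $d_\pi^\top$ gives $\eta e^\top v_\infty = n\bar J_\pi$. Substituting back, $v_\infty$ solves the $n$-step Poisson equation $(I-P_\pi^n)v = \sum_{k<n}P_\pi^k(r_\pi - \bar J_\pi e)$.

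Observe that $v_\pi$ solves this equation: summing \eqref{eq bellman equation} composed with $P_\pi^k$ telescopes. By irreducibility, $\ker(I-P_\pi^n)=\mathrm{span}(e)$, so $v_\infty - v_\pi \in\mathrm{span}(e)$, and therefore $v_\infty\in\fV_*$.

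The main obstacle is Step 1. We must match the exact form of the hypotheses of Lemma \ref{lem: liu ode convergence}, in particular any requirement on the Markov noise, such as existence of Poisson-equation solutions, which holds for a finite ergodic chain. We also need to check the bookkeeping between SA time $t$ and environment time $t+n-1$; this shift is harmless since it only reindexes the rates.
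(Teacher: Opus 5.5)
Your proposal is correct and is essentially the paper's proof. You check the hypotheses of Lemma~\ref{lem: liu ode convergence}, obtain G.A.S.\ of the ODE@$\infty$ from Theorem~\ref{thm: on policy stability} and Theorem 4.5 of Khalil, note that the only bounded invariant set of the affine ODE is $\{v_\infty\}$, and identify $v_\infty\in\fV_*$; for that last step you usefully write out the $n$-step version of the Wan et al.\ argument that the paper only cites. Small precisions: the noise hypothesis you need is the averaging condition of Assumption~\ref{as: liu lil}, not a Poisson-equation condition (the paper, like you, takes it from Liu et al.'s remarks for finite ergodic chains), and Assumption~\ref{as: liu Hc} holds with $\kappa(c)=1/c$, $b(x,y)=H(0,y)$ because $H$ is affine in $v$; also, $\ker(I-P_\pi^n)=\mathrm{span}(e)$ needs aperiodicity as well as irreducibility of $P_\pi$, both of which Assumption~\ref{as: markov} provides.
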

\begin{proof}
To prove the almost sure convergence of the differential TD iterates in \eqref{eq: diff TD} to fixed point $v_\infty$ we will utilize Corollary 8 from \citep{liu2024ode} which we present as Lemma \ref{lem: liu ode convergence}. We proceed by verifying the requisite Assumptions \ref{as: liu markov}-\ref{as: liu lil}. Starting with Assumption \ref{as: liu ode conv}, in Theorem \ref{thm: on policy stability} we prove that $A$ defined in \eqref{eq: A definition} is strictly positive stable under Assumptions \ref{as: markov}, \ref{as: lr}, \ref{as: on policy}. Therefore, the ODE@$\infty$ in \eqref{eq: A definition}, is G.A.S. (Theorem 4.5 from \citet{khalil2002nonlinear}). 

Verifying the remaining assumptions is straightforward. Note that our Assumptions \ref{as: markov} and \ref{as: lr} are sufficient to directly satisfy Assumptions \ref{as: liu markov}, \ref{as: liu lr} and \ref{as: liu lil}. We refer the reader to Remarks 1-3 of \citep{liu2024ode} for a discussion on how these are trivially satisfied for ergodic and finite $\qty{Y_t}$. We then verify Assumption \ref{as: liu Hc} in Lemma \ref{lem:Hc}. It is easy to verify that $H(x,y)$ is Lipschitz, which we present for completeness in Lemma \ref{lem:H-lipschitz} that satisfies \ref{as: liu lipschitz}.  Then, Lemma \ref{lem: liu ode convergence} guarantees that $\qty{v_t}$ converges to the invariant set of the ODE, which is a singleton we denote as $v_\infty$. 
\end{proof}


\subsection{Off-Policy Case} \label{sec: off policy}

In the off‐policy setting, we consider three additional assumptions. 

We will first prove that there exists some $\eta_0$ for which for $\eta \in (0, \eta_0]$, $A$ is strictly positively stable using an extension of Lemma \ref{thm: bierkens}, Lemma \ref{lem: 2.11 bierkens}.
\begin{lemma}[Lemma 2.11 from \citet{bierkens2014singular}] \label{lem: 2.11 bierkens}
    Let $B \in \R^{n\times n}, v, w \in \R^n, v, w \geq 0$ satisfy Conditions 1-3 of Lemma \ref{thm: bierkens}. Additionally, let $0$ be an algebraically simple eigenvalue of $B$. Define a matrix-valued curve $\Gamma(t): t \rightarrow B + tvw^\top, t \in \R$. There exists a $t_0 > 0$ such that $\Gamma(t)$ is strictly positive stable for $t\in (0, t_0]$.
\end{lemma}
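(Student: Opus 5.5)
This is a perturbation argument for the simple zero eigenvalue of $B$. The key facts are that all other eigenvalues of $B$ already have strictly positive real part, and that first-order perturbation theory pushes the zero eigenvalue to the right.

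\emph{Step 1: the spectrum of $B$ away from zero.} By Condition 1, $B=\lambda_{\max}(K)I-K$, with $K\ge 0$ implicitly, since $B$ is an $M$-matrix. Every eigenvalue of $B$ has the form $\lambda_{\max}(K)-\mu$ with $\mu\in\sigma(K)$ and $|\mu|\le\lambda_{\max}(K)$. Hence $\Re(\lambda_{\max}(K)-\mu)\ge 0$, with equality only if $\mu=\lambda_{\max}(K)$, i.e. for the eigenvalue $0$ of $B$. Because $0$ is algebraically simple, the other $n-1$ eigenvalues $\lambda_2,\dots,\lambda_n$ (counted with multiplicity) all satisfy $\Re\lambda_j>0$. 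Let $\epsilon\doteq\min_{j\ge2}\Re\lambda_j>0$.

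\emph{Step 2: continuity of the spectrum.} Eigenvalues depend continuously on the entries of $\Gamma(t)=B+tvw^\top$. I will use Rouché's argument on the characteristic polynomial, or equivalently the standard continuity-of-roots result. It gives a $t_1>0$ such that for $|t|\le t_1$, exactly one eigenvalue $\lambda_1(t)$ of $\Gamma(t)$ lies in a small disk around $0$ of radius less than $\epsilon/2$. The remaining $n-1$ eigenvalues have real part greater than $\epsilon/2$. So it remains only to show $\Re\lambda_1(t)>0$ for small $t>0$.

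\emph{Step 3: derivative of the simple eigenvalue.} This is the main step. Since $0$ is algebraically simple, the eigenvalue branch $\lambda_1(t)$ is analytic near $t=0$, and $\lambda_1(0)=0$. It is real for real $t$: the matrix is real, and a non-real simple eigenvalue would come with its conjugate, also inside the disk, contradicting uniqueness. The classical first-order formula, with left and right eigenvectors $z_l,z_r$ from Condition 2, gives
\begin{align}
\lambda_1'(0)=\frac{z_l^\top (vw^\top) z_r}{z_l^\top z_r}=\frac{(z_l^\top v)(w^\top z_r)}{z_l^\top z_r}.
\end{align}
Algebraic simplicity guarantees $z_l^\top z_r\neq 0$. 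By Condition 3 the numerator is nonzero, so $\lambda_1'(0)\neq0$.

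\emph{Step 4: the sign of the derivative.} I still need $\lambda_1'(0)>0$. For this I will use Perron--Frobenius for $K\ge0$. Since $0$ is a simple eigenvalue of $B$, $\lambda_{\max}(K)$ is a simple eigenvalue of $K$. It has nonnegative left and right eigenvectors, so $z_l,z_r$ can be chosen with $z_l,z_r\ge0$. Then $z_l^\top z_r\ge0$, and since it is nonzero it is positive. Also $v,w\ge0$ give $z_l^\top v\ge0$ and $w^\top z_r\ge0$, and Condition 3 makes both strictly positive. Hence $\lambda_1'(0)>0$.

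\emph{Step 5: conclusion.} Because $\lambda_1'(0)>0$, there is $t_0\in(0,t_1]$ with $\lambda_1(t)>0$ for all $t\in(0,t_0]$. Together with Step 2, every eigenvalue of $\Gamma(t)$ then has positive real part for $t\in(0,t_0]$, so $\Gamma(t)$ is strictly positive stable.

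The main obstacle I expect is Step 4. One has to justify nonnegative Perron eigenvectors when $K$ is possibly reducible, where only weak Perron--Frobenius is available. The resolution is that the eigenvectors are unique up to scaling by simplicity, so the nonnegative ones supplied by Perron--Frobenius must be the $z_l,z_r$ of Condition 2.
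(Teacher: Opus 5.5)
Your proof is correct. Note that the paper gives no proof of this statement: it is quoted as Lemma 2.11 of Bierkens and Ran and used as a black box in Lemma~\ref{lem: exists eta0 bound}, so there is no in-paper argument to compare against. You give a self-contained version of the standard perturbation argument, and each hypothesis has a visible role:
\begin{itemize}
\item Step 1 isolates the zero eigenvalue. It uses only that $\lambda_{\max}$ is the spectral radius.
\item Algebraic simplicity gives a single analytic branch $\lambda_1(t)$ and ensures $z_l^\top z_r\neq 0$.
\item Perron--Frobenius together with $v,w\ge 0$ fixes the sign of $\lambda_1'(0)=(z_l^\top v)(w^\top z_r)/(z_l^\top z_r)$.
\end{itemize}
Your route also shows why the paper gets no closed form for $\eta_0$: $t_0$ comes only from continuity of the spectrum and a Taylor remainder. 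In the paper's application ($z_l=d_\pi/d_\mu$, $z_r=e$, $v=d_\mu$, $w=e$), the escaping eigenvalue leaves $0$ at rate $|\mathcal{S}|/\sum_s d_\pi(s)/d_\mu(s)$.

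One small point of rigor. Condition 1 as phrased does not literally force $K\ge 0$. For example, $K=\mathrm{diag}(-1,1)$ gives the singular $M$-matrix $\mathrm{diag}(2,0)$. Step 1 does not need $K\ge 0$. In Step 4 you can apply weak Perron--Frobenius to $K+cI\ge 0$ for large $c$; the off-diagonal part of $K$ is nonnegative because $B$ is a $Z$-matrix. Alternatively, use the nonnegative $P$ of any splitting $B=\gamma I-P$, where singularity forces $\gamma=\lambda_{\max}(P)$. In the paper's use, $K=(I-D_\mu)+D_\mu P_\pi^n\ge 0$ anyway.

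The realness of $\lambda_1(t)$ is also unnecessary. $\Re\lambda_1(t)=t\,\lambda_1'(0)+o(t)$ with $\lambda_1'(0)>0$ already gives positivity for small $t>0$.
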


\begin{lemma}\label{lem: exists eta0 bound}
    Let Assumption \ref{as: markov} hold. Then, there exits a $\eta_0 >0$ such that $A = B + \eta d_\mu e^\top$ is strictly positive stable for $\eta \in (0, \eta_0]$. 
\end{lemma}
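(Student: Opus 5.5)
The plan is to apply Lemma \ref{lem: 2.11 bierkens} directly, with $B \doteq I - \qty(I + D_\mu(P_\pi^n - I))$, $v \doteq d_\mu$ and $w \doteq e$. The scalar $\eta$ is absorbed into the curve parameter $t$, so that $\Gamma(\eta) = B + \eta d_\mu e^\top = A$. Lemma \ref{lem: 2.11 bierkens} has four hypotheses, and I will verify each in turn.

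First, nonnegativity. Since $d_\mu > 0$ and $e > 0$, we have $v, w \geq 0$.

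Second, Conditions 1--3 of Lemma \ref{thm: bierkens}. These were established in Lemma \ref{lem: first three conds} for the choice $v = \eta d_\mu$. That lemma only uses Assumption \ref{as: markov}, so it is off-policy valid. Conditions 1 and 2 do not involve $v$ at all. Condition 3 becomes $(z_l^\top d_\mu)(e^\top z_r) \neq 0$, which holds by the same uniform-sign argument with $z_l \in \text{span}(d_\pi/d_\mu)$ and $z_r \in \text{span}(e)$. Explicitly, $z_l^\top d_\mu$ is a nonzero multiple of $\sum d_\pi = 1$, and $e^\top z_r$ is a nonzero multiple of $\ns$.

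Third, and this is the only genuinely new hypothesis, $0$ must be an algebraically simple eigenvalue of $B$. In the proof of Lemma \ref{lem: first three conds} we already argued that $K = (I - D_\mu) + D_\mu P_\pi^n$ is nonnegative and irreducible. Irreducibility holds because $P_\pi^n$ is irreducible under Assumption \ref{as: markov}, and left multiplication by a positive diagonal matrix plus addition of a nonnegative diagonal matrix preserves the off-diagonal zero pattern. Also $\lambda_{\max}(K) = 1$. The Perron--Frobenius theorem (Theorem 8.4.4 in \citet{horn2012matrix}) then says that $1$ is an algebraically simple eigenvalue of $K$. Since $B = I - K$, the eigenvalues of $B$ are $1 - \lambda$ for $\lambda \in \sigma(K)$ with matching algebraic multiplicities, so $0$ is an algebraically simple eigenvalue of $B$.

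I expect this third hypothesis to be the main point needing care. One subtlety is that $D_\mu$ may have entries equal to $1$ in degenerate cases, making $I - D_\mu$ non-positive on some diagonal entries. This does not threaten nonnegativity, because $d_\mu(s) \le 1$ always. Irreducibility only depends on the off-diagonal pattern, so it is unaffected.

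With all hypotheses verified, Lemma \ref{lem: 2.11 bierkens} yields some $t_0 > 0$ such that $B + t\, d_\mu e^\top$ is strictly positive stable for all $t \in (0, t_0]$. Setting $\eta_0 \doteq t_0$ and $t = \eta$ gives exactly that $A = B + \eta d_\mu e^\top$ is strictly positive stable for all $\eta \in (0, \eta_0]$, which is the claim.
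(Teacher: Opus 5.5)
Your proposal is correct and follows the paper's proof essentially step for step. Like the paper, you apply Lemma \ref{lem: 2.11 bierkens} with $v = d_\mu$, $w = e$, $t = \eta$, and you reuse Lemma \ref{lem: first three conds} for Conditions 1--3; the paper handles the rescaling of $v$ by $\eta$ in a footnote, while you do it explicitly. You also obtain algebraic simplicity of $0$ for $B = I - K$ from the Perron--Frobenius simplicity of the eigenvalue $1$ of the irreducible row-stochastic $K$, which the paper phrases as $\chi_B(\lambda) = (-1)^{\ns}\chi_K(1-\lambda)$. Your explicit check that $v, w \geq 0$ spells out a hypothesis the paper leaves implicit. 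Your aside about $d_\mu(s) = 1$ is harmless: that case forces $\ns = 1$ and gives a zero, not a negative, diagonal entry.
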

\begin{proof}
    Recall from \eqref{eq: B definition} that $A=B+\eta d_\mu e^\top$ where $B \doteq I - \qty(I + D_\mu(P_\pi^n - I))$. By Lemma $\ref{lem: first three conds}$, $B$, $d_\mu$, and $e$ satisfy Conditions 1-3 of Lemma \ref{thm: bierkens}. \footnote{In Lemma $\ref{lem: first three conds}$, we prove this for $v= \eta d_\mu$ instead of $v= d_\mu$. However, since $\eta$ is a positive constant, its easy to see that the argument still holds.} Therefore, if we set $v=d_\mu$, $w=e$ and $t=\eta$, then $A= B+tvw^\top$ and Lemma \ref{lem: 2.11 bierkens} guarantees the existence of some $t_0 > 0$ (hence $\eta_0$) for which $\Gamma(t) = B+tvw^\top$ is strictly positive stable on $(0,t_0]$. It remains only to check that $0$ is algebraically simple for $B$. 

To show this, we need to show that $0$ is a simple root of the characteristic polynomial of $B$. We use $\chi_M(\lambda) = \det(M-\lambda I)$ to denote the characteristic polynomial of a matrix $M$. Recall the definition of $K$ from \eqref{eq: K def}. Then, the characteristic polynomial of $B = I-K$ is,
\begin{align}
\chi_{B}(\lambda)&=\det \qty((I-K)-\lambda I) \\
          &=(-1)^{\ns}\det \qty(K-(1-\lambda)I)\\ 
          &=(-1)^{\ns}\chi_{K}(1-\lambda).
\end{align}
We proved in Lemma \ref{thm: on policy stability} that $1$ is an algebraically simple eigenvalue of $K$. In other words, $\chi_K(\kappa)$ has a simple root at $\kappa = 1$. Then the change of variable $\kappa \mapsto 1-\lambda$ implies $\lambda=0$ is a simple root of $\chi_{B}$. Thus $0$ is an algebraically simple eigenvalue of $B$.

 Then, Lemma \ref{lem: 2.11 bierkens} proves that there exists some $\eta_0 > 0$ for which $A = B+ \eta d_\mu e^\top$ is strictly positive stable on $(0,\eta_0]$.
\end{proof}
Having established that $A$ is strictly positive–stable, the almost‐sure convergence of \eqref{eq: diff TD} to $v_\infty$ follows immediately by the same argument used in Corollary \ref{corr: on policy convergence}.  In that proof, every assumption except \ref{as: liu ode conv} was checked without invoking Assumption \ref{as: on policy}, and \ref{as: liu ode conv} itself is a direct consequence of the strict positive stability of $A$. Therefore, we omit the proof of the corollary to avoid redundancy.

\begin{corollary}
    Let Assumptions \ref{as: markov}, \ref{as: lr} hold. Then there exists some positive constant $\eta_0>0$ such that for $\eta \in (0,\eta_0]$ the iterates $\qty{v_t}$ in \eqref{eq: diff TD} satisfy $\lim_{t \rightarrow \infty} v_t = v_\infty ,\ a.s.$, where $v_\infty \in \fV_*$.
\end{corollary}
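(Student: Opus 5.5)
The plan is to reduce this corollary to Lemma \ref{lem: liu ode convergence} (Corollary 8 of \citet{liu2024ode}), following the proof of Corollary \ref{corr: on policy convergence} step by step. Fix $\eta_0$ as given by Lemma \ref{lem: exists eta0 bound} and take any $\eta \in (0,\eta_0]$. That lemma uses only Assumption \ref{as: markov}, and it shows that $A = D_\mu(I - P_\pi^n + \eta ee^\top)$ is strictly positive stable. So $-A$ is Hurwitz, and by Theorem 4.5 of \citet{khalil2002nonlinear} the linear ODE@$\infty$ in \eqref{eq: A definition} is G.A.S. with the origin as its unique equilibrium. This gives Assumption \ref{as: liu ode conv}.

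Next I check the remaining hypotheses of Lemma \ref{lem: liu ode convergence}. None of them uses Assumption \ref{as: on policy}.
\begin{enumerate}
\item $\qty{Y_t}$ from \eqref{eq: markov auxiliary} is generated by $\mu$. Under Assumption \ref{as: markov} it is a finite, irreducible, aperiodic chain. Together with Assumption \ref{as: lr}, this yields Assumptions \ref{as: liu markov}, \ref{as: liu lr} and \ref{as: liu lil}, via Remarks 1--3 of \citet{liu2024ode}.
\item $H(v,y)$ in \eqref{eq: H def} is affine in $v$. Its coefficients are bounded, because $\rho_{0:n-1}(y)$ takes finitely many values and each is finite by the coverage part of Assumption \ref{as: markov}. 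Hence Lemma \ref{lem:H-lipschitz} gives Lipschitz continuity.
\item Lemma \ref{lem:Hc} gives the convergence of $H(cv,y)/c$ to $-\rho_{0:n-1}(y)\,\mathbb{I}\{s=s_0\}\bigl(\eta\sum v - v(s_n)+v(s_0)\bigr)$. Its expectation under $d_\fY$ is exactly $h_\infty(v)=-Av$.
\end{enumerate}
The off-policy ratio only changes constants in these checks, so they go through unchanged.

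Lemma \ref{lem: liu ode convergence} then implies that $\qty{v_t}$ is bounded and converges almost surely to the invariant set of \eqref{eq: ode method}. Since $h(v) = D_\mu r^{(n)} - Av$ with $A$ nonsingular, there is a unique equilibrium $v_\infty = A^{-1}D_\mu r^{(n)}$, and it is G.A.S. So the invariant set is $\{v_\infty\}$, and $v_t \to v_\infty$ almost surely.

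It remains to show $v_\infty \in \fV_*$. From $h(v_\infty)=0$ and $D_\mu \succ 0$ we get $r^{(n)} - \eta ee^\top v_\infty + P_\pi^n v_\infty - v_\infty = 0$. Left-multiply by $d_\pi^\top$. Using $d_\pi^\top P_\pi^n = d_\pi^\top$ and $d_\pi^\top r^{(n)} = n\bar J_\pi$, this gives $\eta\, e^\top v_\infty = n\bar J_\pi$. Substituting back yields
\[
v_\infty = r^{(n)} - n\bar J_\pi e + P_\pi^n v_\infty .
\]
Every $v\in\fV_*$ satisfies this $n$-step Poisson equation, which one sees by iterating \eqref{eq bellman equation}. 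The homogeneous equation $(I-P_\pi^n)u=0$ has only the solutions $u\in\mathrm{span}(e)$, because $P_\pi^n$ is irreducible. Hence $v_\infty - v_\pi \in \mathrm{span}(e)$, i.e., $v_\infty \in \fV_*$. This is the step where the $n$-step setting differs from Appendix B.2.1 of \citet{wan2020learning}, though it remains routine.

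The real difficulty, strict positive stability of $A$ in the off-policy setting, is already handled by Lemma \ref{lem: exists eta0 bound}. The step needing most care here is therefore confirming that Lemma \ref{lem:Hc} and Lemma \ref{lem:H-lipschitz} did not implicitly use $\mu=\pi$.
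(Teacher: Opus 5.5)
Your proposal is correct and matches the paper's argument, whose proof the paper omits: like you, the paper reuses the proof of Corollary~\ref{corr: on policy convergence}, noting that every hypothesis of Lemma~\ref{lem: liu ode convergence} except Assumption~\ref{as: liu ode conv} was verified without Assumption~\ref{as: on policy}, and that Assumption~\ref{as: liu ode conv} now follows from the strict positive stability of $A$ given by Lemma~\ref{lem: exists eta0 bound}. Your explicit $n$-step check that $v_\infty \in \fV_*$ (left-multiplying by $d_\pi^\top$ to get $\eta\, e^\top v_\infty = n\bar{J}_\pi$, then using $\ker(I-P_\pi^n)=\mathrm{span}(e)$) is a correct filling-in of a step the paper delegates to Appendix B.2.1 of \citet{wan2020learning}, which treats only $n=1$.
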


The main limitation of this result is that, while it guarantees some $\eta_0>0$, it does not offer a closed form for its value. To address this, we impose the additional assumption that $P_\pi^n$ becomes strictly positive under sufficiently large $n$. Under this condition, we can characterize $\eta_0$.

\begin{assumption} \label{as: P strictly positive}
    $P_\pi^n$ is strictly positive.
\end{assumption}
Such an $n$ is guaranteed to exist by the irreducibility of $P_\pi$ from Assumption \ref{as: markov} \citep{levin2017markov}.
\begin{theorem}
    Let Assumption \ref{as: markov} and \ref{as: P strictly positive} hold. Then,  $A = B + \eta d_\mu e^\top$ is strictly positive stable for $\eta \in (0, \eta_0]$ where $\eta_0 \doteq 2 \min_{i,j}P_\pi^n(i,j)$.
\end{theorem}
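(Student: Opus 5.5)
The plan is to invoke Lemma \ref{thm: bierkens} with the decomposition \eqref{eq: B definition}, i.e. $B \doteq I - K$ with $K = (I - D_\mu) + D_\mu P_\pi^n$ as in \eqref{eq: K def}, $v \doteq \eta d_\mu$ and $w \doteq e$. Conditions 1--3 were already verified in Lemma \ref{lem: first three conds} under Assumption \ref{as: markov} alone, for any $\eta>0$. Since Condition 4 is not available off-policy (in general $e^\top D_\mu(I-P_\pi^n) = d_\mu^\top(I - P_\pi^n) \neq 0$), we will instead verify Condition 5, which is an entrywise inequality.

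First, positivity. We have $v = \eta d_\mu > 0$ because $\eta>0$ and $d_\mu>0$ by irreducibility, and $w = e > 0$ trivially. The remaining requirement is $2K_{i,j} \geq v_i w_j = \eta\, d_\mu(i)$ for all $i,j$.

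Next, the entrywise inequality, split into two cases.
\begin{itemize}
\item Off-diagonal entries ($i \neq j$): here $K_{i,j} = d_\mu(i) P_\pi^n(i,j)$. The requirement becomes $2 d_\mu(i) P_\pi^n(i,j) \geq \eta\, d_\mu(i)$. Dividing by $d_\mu(i) > 0$, this is equivalent to $\eta \leq 2 P_\pi^n(i,j)$, which holds whenever $\eta \leq \eta_0 = 2\min_{i,j} P_\pi^n(i,j)$.
\item Diagonal entries: here $K_{i,i} = 1 - d_\mu(i) + d_\mu(i) P_\pi^n(i,i)$. Since $d_\mu(i) \leq 1$, we get $K_{i,i} \geq d_\mu(i) P_\pi^n(i,i)$. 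The same bound $\eta \le 2P_\pi^n(i,i)$ therefore gives $2K_{i,i} \geq \eta\, d_\mu(i)$.
\end{itemize}
Assumption \ref{as: P strictly positive} guarantees $\min_{i,j} P_\pi^n(i,j) > 0$, so $\eta_0>0$ and the interval $(0,\eta_0]$ is nonempty.

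With Conditions 1--3 and 5 in place, Lemma \ref{thm: bierkens} yields that $A = B + \eta d_\mu e^\top$ is strictly positive stable for every $\eta \in (0,\eta_0]$.

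The only real care point is the diagonal case, where the extra $1 - d_\mu(i)$ term must be shown to be nonnegative. It is, because $d_\mu$ is a probability vector. Otherwise the argument is a direct substitution into Condition 5.
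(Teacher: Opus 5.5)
Your proof is correct and follows the paper's argument: both reuse Lemma \ref{lem: first three conds} for Conditions 1--3 and then verify Condition 5 of Lemma \ref{thm: bierkens} through the bound $K_{ij} \ge d_\mu(i)\, p_{\min}$. The only differences are presentational: the paper covers diagonal and off-diagonal entries in one line using $1 - d_\mu(i) \ge 0$, where you split them into two cases, and you explicitly check $v = \eta d_\mu > 0$ and $w = e > 0$, which the paper leaves implicit.
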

\begin{proof}
    To prove that $A$ is strictly positive stable with the addition of Assumption \ref{as: P strictly positive}, we will once again utilize Lemma \ref{thm: bierkens}. Lemma~\ref{lem: first three conds} shows that $B$ defined in \eqref{eq: B definition}, $v = \eta d_\mu,$ and $w=e$ satisfy the first three conditions of Lemma~\ref{thm: bierkens}. In addition to Conditions 1-3, we will also prove that Condition 5 holds, which is sufficient to guarantee the strict positive stability of $A$. 
    Because $P_\pi^{n}$ is strictly positive, we have
\begin{align}
  K_{ij}
  &= \qty(1-d_\mu(i))\mathbb{I}\qty{i=j}
  + d_\mu(i)P_\pi^{\,n}(i,j) \\
  &\ge d_\mu(i)\,p_{\min}
  \quad \forall\, i,j .
\end{align}
where we define $p_{\min} \doteq \min_{i,j}P_\pi^n(i,j)>0$.
For any $\eta\in(0,\eta_0]$, we therefore have
\begin{align}
  2 K_{ij}\ge 2 d_\mu(i) p_{\min} \ge \eta d_\mu(i) =v_i w_j, \, \forall\, i,j,
\end{align}
so the entry-wise inequality in Condition 5
of Lemma~\ref{thm: bierkens} holds, and the theorem follows.
\end{proof}

Having established that $A$ is strictly positive–stable, the almost‐sure convergence of \eqref{eq: diff TD} to $v_\infty$ follows immediately by the same argument used in Corollary \ref{corr: on policy convergence}.
\begin{corollary} \label{corr: off policy conv eta}
Let Assumptions \ref{as: markov}, \ref{as: lr}, and \ref{as: P strictly positive} hold. Then for $\eta \in (0,\eta_0]$, where $\eta_0 \doteq 2 \min_{i,j}P_\pi^n(i,j)$,
the iterates $\qty{v_t}$ in \eqref{eq: diff TD} satisfy: $
        \lim_{t \rightarrow \infty} v_t = v_\infty ,\ a.s. $, where $v_\infty \in \fV_*$.
\end{corollary}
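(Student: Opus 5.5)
The plan follows Corollary \ref{corr: on policy convergence} step for step; the only change is the source of the strict positive stability of $A$.

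\textbf{Stability.} Take $\eta \in (0,\eta_0]$ with $\eta_0 \doteq 2\min_{i,j}P_\pi^n(i,j)$. The preceding theorem, which uses only Assumptions \ref{as: markov} and \ref{as: P strictly positive}, shows that $A = D_\mu(I - P_\pi^n + \eta ee^\top)$ is strictly positive stable. By Theorem 4.5 of \citet{khalil2002nonlinear}, the ODE@$\infty$ in \eqref{eq: A definition}, $\dot v = -Av$, is therefore G.A.S. with the origin as its unique equilibrium. This verifies Assumption \ref{as: liu ode conv} of Lemma \ref{lem: liu ode convergence}.

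\textbf{Remaining hypotheses.} These do not involve on-policyness.
\begin{enumerate}
\item The augmented chain $\{Y_t\}$ in \eqref{eq: markov auxiliary} lives on a finite space and is irreducible and aperiodic under the behavior policy $\mu$ by Assumption \ref{as: markov}. Together with Assumption \ref{as: lr}, this gives Assumptions \ref{as: liu markov}, \ref{as: liu lr} and \ref{as: liu lil}, as in Remarks 1--3 of \citet{liu2024ode}.
\item Lipschitz continuity of $H$ (Lemma \ref{lem:H-lipschitz}) still holds. The importance ratios $\rho_{0:n-1}(y)$ are bounded because $\fY$ is finite and the coverage condition makes every ratio that actually appears finite.
\item Lemma \ref{lem:Hc} gives Assumption \ref{as: liu Hc}. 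Here $H(cv,y)/c \to H_\infty(v,y)$, where $H_\infty$ is $H$ with the reward removed, and its expectation is $-Av$.
\end{enumerate}

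\textbf{Conclusion.} Lemma \ref{lem: liu ode convergence} then implies that $v_t$ converges almost surely to the invariant set of the ODE $\dot v = h(v)$ in \eqref{eq: ode method}. Since $h(v) = D_\mu r^{(n)} - Av$ is affine and $A$ is nonsingular (strictly positive stable), this ODE has a unique G.A.S. equilibrium $v_\infty = A^{-1}D_\mu r^{(n)}$, so the invariant set is the singleton $\{v_\infty\}$.

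To see that $v_\infty \in \fV_*$: $h(v_\infty)=0$ and $D_\mu$ is invertible, so
\[
r^{(n)} - \eta ee^\top v_\infty + P_\pi^n v_\infty - v_\infty = 0 .
\]
Multiplying on the left by $d_\pi^\top$, and using $d_\pi^\top P_\pi^n = d_\pi^\top$ and $d_\pi^\top r^{(n)} = n\bar J_\pi$, gives $\eta\, e^\top v_\infty = n\bar J_\pi$. The equation then reads
\[
(I - P_\pi^n)v_\infty = r^{(n)} - n\bar J_\pi e .
\]
The vector $v_\pi$ also solves this equation: iterate the Bellman equation \eqref{eq bellman equation} $n$ times. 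Since $P_\pi^n$ is irreducible, $\ker(I - P_\pi^n) = \operatorname{span}(e)$, so $v_\infty - v_\pi \in \operatorname{span}(e)$. This is the argument of Appendix B.2.1 of \citet{wan2020learning}.

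\textbf{Main obstacle.} The only step beyond the on-policy case is confirming that the Lipschitz and $H_\infty$ limit conditions survive the importance sampling ratios. I expect this to be routine because $\fY$ is finite, so $\max_y \rho_{0:n-1}(y) < \infty$.
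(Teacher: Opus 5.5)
Your proposal is correct and follows the paper's proof: the paper obtains this corollary by taking the strict positive stability of $A$ for $\eta \in (0,\eta_0]$ from the preceding theorem and then reusing the verification of the hypotheses of Lemma~\ref{lem: liu ode convergence} from Corollary~\ref{corr: on policy convergence}. Your explicit $d_\pi^\top$-multiplication argument for $v_\infty \in \fV_*$ is correct and simply spells out the step the paper delegates to Appendix B.2.1 of \citet{wan2020learning}.
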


Having presented two sufficient conditions on $\eta$ and $n$,
we now present the third sufficient condition on $P_\pi$.
Namely, 
under the assumption that $P_\pi$ is doubly stochastic, we are able to establish the strict positive stability of $A$ for any $\eta \geq 0$ and $n > 0$.

\begin{assumption} \label{as: doubly stochastic}
    $P_\pi^n$ is doubly stochastic (i.e. $P_\pi^n e = e$ and $e^\top P_\pi^n = e^\top$).
\end{assumption}
Admittedly, doubly stochastic matrices are a small portion of the transition matrices considered in RL. They do arise, however, in simple random walks on $k$-regular, undirected graphs, such as cycles and complete graphs \citep{levin2017markov}. Furthermore,
doubly stochastic matrices are also a popular mathematical model (Section 8.7 \cite{horn2012matrix}).


\begin{theorem}
    Let Assumptions \ref{as: markov}, \ref{as: lr}, and \ref{as: doubly stochastic} hold. Then $A$ is strictly positive-stable for every $n$ and $\eta > 0$.
\end{theorem}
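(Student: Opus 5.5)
The plan is to avoid Lemma~\ref{thm: bierkens} here. Condition 4 would require $d_\mu^\top(I-P_\pi^n)=0$ or $(I-P_\pi^n)d_\mu=0$, and neither holds for a general behavior distribution $d_\mu$. Instead, I will show that the inner matrix $M \doteq I - P_\pi^n + \eta e e^\top$ is positive definite in the (not necessarily symmetric) sense $y^\top M y > 0$ for all $y \neq 0$. I will then use the elementary fact that left-multiplying a positive definite matrix by a positive diagonal matrix preserves strict positive stability. This is the same mechanism described in the introduction for discounted TD, where n.d.\ of $A$ passes to $DA$.

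\textbf{Step 1 (positive definiteness of $M$).} Since $y^\top M y = y^\top \tfrac{M+M^\top}{2} y$, it suffices to study the symmetric part
\begin{align}
S \doteq I - Q + \eta e e^\top, \qquad Q \doteq \tfrac{1}{2}\qty(P_\pi^n + (P_\pi^n)^\top).
\end{align}
By Assumption \ref{as: doubly stochastic}, $Q$ is symmetric, nonnegative and doubly stochastic. Hence $\lambda_{\max}(Q)=1$, every eigenvalue of $Q$ is real and at most $1$, and $I-Q \succeq 0$. Moreover $Q$ is irreducible, because its zero pattern contains that of $P_\pi^n$, which is irreducible by the argument in Lemma \ref{lem: first three conds}. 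Perron--Frobenius therefore gives $\ker(I-Q)=\text{span}(e)$.

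Now write any $y$ as $y = ce + y_\perp$ with $e^\top y_\perp = 0$. The two pieces decouple, since $(I-Q)e=0$ and $e^\top(I-Q)=0$:
\begin{align}
y^\top S y = y_\perp^\top (I-Q) y_\perp + \eta c^2 \ns^2 .
\end{align}
The first term is strictly positive unless $y_\perp=0$, because $y_\perp\perp e$ and $\ker(I-Q)=\text{span}(e)$. The second term is strictly positive unless $c=0$, since $\eta>0$. Hence $S \succ 0$, and $M$ is positive definite.

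\textbf{Step 2 (transfer through $D_\mu$).} Let $\lambda\in\mathbb{C}$ be an eigenvalue of $A = D_\mu M$ with eigenvector $x\neq 0$. Because $D_\mu$ is invertible ($d_\mu>0$), we have $Mx = \lambda D_\mu^{-1}x$. Multiplying by $x^*$ gives
\begin{align}
x^* M x = \lambda\, x^* D_\mu^{-1} x .
\end{align}
The factor $x^* D_\mu^{-1} x$ is real and strictly positive. For the left side, write $x=a+ib$ with $a,b$ real. Because $M$ is real, $\Re(x^*Mx) = a^\top S a + b^\top S b > 0$. Therefore $\Re\lambda>0$, so $A$ is strictly positive stable. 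This holds for every $n$ satisfying Assumption \ref{as: doubly stochastic} and every $\eta>0$.

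\textbf{Main obstacle.} The one delicate point is Step 1, specifically confirming that $\ker(I-Q)$ is exactly $\text{span}(e)$. This requires irreducibility of the symmetrization $Q$, and it is here that Assumption \ref{as: markov} enters. Double stochasticity is what guarantees $e^\top(I-Q)=0$, so that the $e$-direction is handled entirely by the $\eta ee^\top$ term. Without double stochasticity, $I - \tfrac12(P+P^\top)$ need not be positive semidefinite, and the argument would break down.
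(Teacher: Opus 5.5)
Your proof is correct, and its overall structure matches the paper's. The paper applies the Lyapunov criterion with certificate $D_\mu^{-1}$, for which $A^\top D_\mu^{-1} + D_\mu^{-1}A = 2S$ with $S$ exactly your symmetric part. Your eigenvector computation in Step 2 is just a direct proof of that Lyapunov implication.

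Where you differ is Step 1, and there your argument is more careful than the paper's. The paper bounds $v^\top P_\pi^n v \le \norm{P_\pi^n}\norm{v}^2$ using $\norm{P_\pi^n}_2 = 1$. This gives only $v^\top(I - P_\pi^n + \eta ee^\top)v \ge \eta (v^\top e)^2$, a lower bound that vanishes for every nonzero $v \perp e$. So the strict inequality the paper asserts for all nonzero $v$ does not follow from its chain.

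Your decomposition $y = ce + y_\perp$ handles exactly those directions. The key input is $\ker(I-Q) = \text{span}(e)$, which you obtain from irreducibility of $Q$ via Perron--Frobenius.

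This extra step is necessary, not cosmetic. For a reducible doubly stochastic matrix such as $P_\pi^n = I$, one gets $A = \eta\, d_\mu e^\top$, which is singular whenever $\ns \ge 2$. So the ergodicity assumption (Assumption~\ref{as: markov}) must enter the argument through irreducibility of $P_\pi^n$. In your proof it enters explicitly; in the paper's proof of this theorem it never does.
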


\begin{proof}
By the Lyapunov theorem (Theorem 4.6 in \citet{khalil2002nonlinear}), $A$ is positive‐stable if and only if there exists a symmetric positive‐definite matrix $M$ such that $
A^\top M + M A \succ 0$.
In the off‐policy case we take $M \doteq D_\mu^{-1}$.  Then with the definition of $A$ from \eqref{eq: A definition}, we have
\begin{align}
A^\top M + M A
&= (I - P_\pi^{n\top} + \eta\,e e^\top)
+ (I - P_\pi^n + \eta\,e e^\top).
\end{align}
We now show that $(I - P_\pi^n + \eta e e^\top)$ is positive‐definite.  When $P_\pi^n$ is doubly stochastic (so $\norm{P_\pi^n} = 1$), for any nonzero $v \in \R^\ns$,
\begin{align}
    v^\top \qty(I - P_\pi^n + \eta\,e e^\top)v
    &= \textstyle v^{\top}v - v^{\top}P_\pi^n v+ \eta(v^\top e)^2 \\
    &\ge \textstyle \norm{v}^{2} - \norm{P_\pi^n}\norm{v}^{2} + \eta (v^\top e)^2\\
          & =(1-\norm{P_\pi^n})\norm{v}^{2} + \eta (v^\top e)^2 \\
    &= \eta (v^\top e)^2> 0
\end{align}
where the first inequality holds by Cauchy-Schwarz, and the second equality holds because $\norm{P_\pi^n} = 1$.
\end{proof}

\begin{corollary}
    Let Assumptions \ref{as: markov}, \ref{as: lr}, and \ref{as: doubly stochastic} hold. Then the iterates $\qty{v_t}$ in \eqref{eq: diff TD} satisfy: $\lim_{t \rightarrow \infty} v_t = v_\infty ,\ a.s. $, where $v_\infty \in \fV_*$.
\end{corollary}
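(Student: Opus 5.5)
The plan follows the template of Corollary \ref{corr: on policy convergence}: reduce almost sure convergence of \eqref{eq: diff TD} to Lemma \ref{lem: liu ode convergence} (Corollary 8 of \citet{liu2024ode}) and check its hypotheses. The only hypothesis that used Assumption \ref{as: on policy} there was Assumption \ref{as: liu ode conv}, the global asymptotic stability of the ODE@$\infty$ \eqref{eq: A definition}. That hypothesis is supplied by the preceding theorem: under Assumptions \ref{as: markov} and \ref{as: doubly stochastic}, $A = D_\mu(I - P_\pi^n + \eta ee^\top)$ is strictly positive stable for every $n$ and every $\eta>0$. Hence $-A$ is Hurwitz, and the linear ODE $\dv{v(t)}{t} = -Av(t)$ is G.A.S. by Theorem 4.5 of \citet{khalil2002nonlinear}.

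The remaining hypotheses are checked exactly as before, independently of $\mu=\pi$. The chain $\qty{Y_t}$ in \eqref{eq: markov auxiliary} is finite, irreducible and aperiodic by Assumption \ref{as: markov}. Together with Assumption \ref{as: lr}, this gives Assumptions \ref{as: liu markov}, \ref{as: liu lr} and \ref{as: liu lil} via Remarks 1--3 of \citet{liu2024ode}. Lipschitz continuity of $H$ (Assumption \ref{as: liu lipschitz}) is Lemma \ref{lem:H-lipschitz}; the importance sampling ratios are bounded because $\fS$ and $\fA$ are finite and $\mu$ covers $\pi$. The scaled-operator condition (Assumption \ref{as: liu Hc}) is Lemma \ref{lem:Hc}. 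Neither lemma uses on-policyness, since $H$ is affine in $v$ with coefficients bounded uniformly over the finite $\fY$. Lemma \ref{lem: liu ode convergence} then says $\qty{v_t}$ converges almost surely to the invariant set of \eqref{eq: ode method}.

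It remains to identify that limit. Since $h(v) = -Av + D_\mu r^{(n)}$ and $A$ is nonsingular (strictly positive stable), \eqref{eq: ode method} has the unique equilibrium $v_\infty = A^{-1}D_\mu r^{(n)}$, which is G.A.S.; the invariant set is therefore the singleton $\qty{v_\infty}$. From $h(v_\infty)=0$ and $D_\mu \succ 0$ we get $r^{(n)} - \eta ee^\top v_\infty + P_\pi^n v_\infty - v_\infty = 0$. The argument of Appendix B.2.1 of \citet{wan2020learning} then gives $v_\infty \in \fV_*$.

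For safety I would re-verify that step directly for $n$ steps. Left-multiply by $d_\pi^\top$ and use $d_\pi^\top P_\pi^n = d_\pi^\top$ and $d_\pi^\top r^{(n)} = n\bar J_\pi$. This gives $\eta e^\top v_\infty = n\bar J_\pi$, so $v_\infty$ solves the $n$-step Poisson equation $v = r^{(n)} - n\bar J_\pi e + P_\pi^n v$. The $n$-step Poisson equation has the same solution set $\fV_*$ as \eqref{eq bellman equation}. Indeed, $v_\pi$ solves it by iterating \eqref{eq bellman equation} $n$ times, and irreducibility of $P_\pi^n$ makes its kernel $\text{span}(e)$.

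I expect no serious obstacle. The only points needing care are confirming that Lemmas \ref{lem:Hc} and \ref{lem:H-lipschitz} hold without Assumption \ref{as: on policy}, and this $n$-step identification of the fixed point.
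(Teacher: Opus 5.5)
Your proposal is correct and follows the paper's route: you take strict positive stability of $A$ from the preceding doubly-stochastic theorem to supply Assumption~\ref{as: liu ode conv}, reuse the on-policy verification of the other hypotheses of Lemma~\ref{lem: liu ode convergence} (none of which depends on $\mu=\pi$), and identify the limit as the singleton equilibrium. Your direct $n$-step check that $v_\infty \in \mathcal{V}_*$ (left-multiplying by $d_\pi^\top$ to get $\eta e^\top v_\infty = n\bar J_\pi$, then using $\ker(I-P_\pi^n)=\mathrm{span}(e)$) is a sound addition, since the paper only defers to the one-step argument of \citet{wan2020learning}.
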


\section{Challenges and Open Problems} \label{sec: challenges}
We note that our off‐policy convergence guarantee in Corollary \ref{corr: off policy conv eta} rests on the conservative bound
$
\textstyle \eta \le \eta_{0}=2\min_{i,j}P_{\pi}^{\,n}(i,j),
$
which requires $P_\pi^n$ to be strictly positive (Assumption \ref{as: P strictly positive}). 
In this section, we demonstrate that empirically, this estimate for the upper bound of $\eta$ is quite pessimistic. 
We consider a $5\times5$ gridworld and set $n=3$.
Notably, an agent cannot reach every state in three steps.
So $\eta_0 = \min_{i,j}P_\pi^3(i,j)=0$ in this environment and Assumption \ref{as: P strictly positive} is violated.  
However, as Figure \ref{fig: eta} shows, the algorithm still converges for a wide range of $\eta$ values.

This empirical result seems to suggest that the convergence can be obtained for any $\eta$.
Furthermore,
\citet{wan2020learning} also prove the convergence for any $\eta$ with local-clock-based learning rates.
Then we might expect that some future work may be able to prove the convergence for any $\eta$ without the local clock as well.
However,
we must be cautious here.
\citet{anehila2022note} exhibit $M$-matrix counterexamples satisfying Conditions 1–3 for which $B + tv w^{\top}$ fails to remain strictly positive-stable once $t$ exceeds some finite threshold. This implies our Lemma \ref{lem: exists eta0 bound} \emph{may} not hold for large $\eta$.  Crucially, the linear algebra community still lacks a tight upper bound on admissible $\eta$ and has no known necessary and sufficient characterization of triples $(B,v,w)$ that ensure stability under rank-one perturbation \citep{anehila2022note}. Closing that theoretical gap would immediately yield sharper convergence guarantees here, and thus represents an important direction for future work.
\begin{figure}[htbp]
\centering
\includegraphics[width=0.75\linewidth]{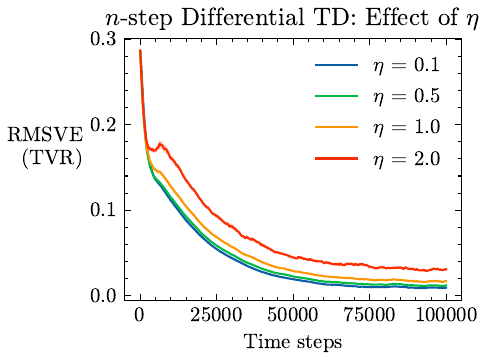}
\caption{Off‐policy convergence of $n$–step differential TD in a $5\times5$ gridworld ($n=3$) for various $\eta$. Although $\eta_0=0$ here, the algorithm is stable across $\eta$. We use a variant of root mean-squared value error from \citet{tsitsiklis1999average}, denoted as ‘RMSVE (TVR)’, which
measures the distance of the estimated values to the nearest solution that satisfies the Bellman equation
\eqref{eq bellman equation}. The trials are averaged over 30 seeds with shaded regions as 1 standard error. The experimental details and results for other $n$ values appear in Appendix C.}
\label{fig: eta}
\end{figure}
\section{Related Work} \label{sec: related}
\paragraph{Average Reward RL.}
Several temporal difference methods have been proposed for Markov decision processes with an average-reward objective. The best known is the average reward TD algorithm of \citet{tsitsiklis1999average}, whose convergence guarantees were first analyzed in the linear function approximation case, and further extended to the tabular setting by \citet{blaser2026asymptotic}. 
The differential TD algorithm we analyze here belongs to the same family but estimates the average reward with the full temporal-difference error instead of only using the reward sample \citep{wan2020learning}. Additional TD-based algorithms for policy evaluation and control in the average-reward setting include \citet{konda2000actor,abounadi2001learning,yang2016efficient,wan2021average,zhang2021policy,zhang2020average,zhang2021breaking,he2023loosely,saxena2023off,xie2025finite}.
\paragraph{Convergence of RL Algorithms.} The investigation of the almost sure convergence of RL algorithms is an active area of research.
Most prior work relies on the ODE based approach \citep{benveniste1990MP,kushner2003stochastic,borkar2009stochastic,liu2024ode},
where the corresponding ODE is relatively easy to analyze \citep{tsitsiklis1997analysis,konda2000actor,sutton2009convergent,sutton2009fast,zhang2020gradientdice,zhang2019provably,maei2011gradient,zhang2021truncated}.
By contrast,
the ODE studied in this work is highly nontrivial to analyze, and and we still do not have a complete characterization of it.
Other notable works involving nontrivial ODEs include \citet{meyn2024projected,wang2024almost}.
In addition to the ODE based approach,
the Robbins-Siegmund theorem \citep{robbins1971convergence} and its variant \citep{liu2025extensions} are gaining increasing attention for establishing almost sure convergence \citep{bertsekas1996neuro,qian2024almost,qian2025revisiting,liu2025extensions}, and has recently been formally verified \citep{zhang2025towards}. 
Beyond (asymptotic) almost sure convergence, the $L^2$ convergence rates of RL algorithms are also widely studied.
Notable works include \citet{mahadevan2014proximal, liu2015finite,wang2017finite,srikant2019finite,zou2019finite,wu2020finite,zhang2022globaloptimalityfinitesample,xie2025finite,liu2025linearq}.

\paragraph{Matrix Stability Under Perturbations and $D$-stability.}
The stability question in our paper lies within the broader $D$-stability problem which asks whether a given real matrix $A \in \mathbb{R}^{n \times n}$ remains strictly positive stable under left multiplication by any positive diagonal matrix $D \succ 0$. Despite Johnson’s necessary and sufficient criteria in low dimensions \citep{johnson1974sufficient,johnson1974d}, the general case ($n>4$) remains open, see \citet{hershkowitz1992recent} and \citet{kushel2019unifying} for comprehensive surveys. Our analysis is based on the results of \citet{bierkens2014singular} who investigated the $D$-stability of $M$ matrices under nonnegative rank-one perturbations. Their work extends a broader area of research investigating the eigenvalues and Jordan structure of rank-one perturbations of matrices \citep{moro2003low, savchenko2004change, ding2007eigenvalues, mehl2011eigenvalue, ran2012eigenvalues, fourie2013rank, mehl2014eigenvalue, ran2021global}.

\section{Conclusion}
Learning rates that use a local clock have played an essential role in the theoretical analysis of differential TD \citep{wan2020learning}, yet they remain largely unused by practitioners. 
This work bridges that divide by applying $D-$stability and rank-one perturbation theory from the linear algebra community, to provide novel almost sure convergence results of differential TD. To our knowledge, this is the first use of $D-$stability and rank-one perturbation techniques in RL. We expect this approach to enable further theoretical advances in RL, such as convergence proofs for differential Q-learning \citep{wan2020learning} and for RVI Q-learning \citep{abounadi2001learning} without relying on a local clock.

\subsubsection*{Acknowledgements}
EB acknowledges support from the NSF Graduate Research Fellowship under award 1842490. This work is supported in part by the US National Science Foundation under the awards III-2128019, SLES-2331904, and CAREER-2442098, the Commonwealth Cyber Initiative's Central Virginia Node under the award VV-1Q26-001, a Cisco Faculty Research Award, and an NVIDIA academic grant program award.

\bibliography{bibliography}





\clearpage
\appendix
\thispagestyle{empty}

\onecolumn
\aistatstitle{Appendix}

\section{Mathematical Background}

\subsection{Main Results from \citet{liu2024ode}}
First we will restate the main results from \citet{liu2024ode} concerning the convergence of SA iterates of the form \eqref{eq: SA} for completeness.
\begin{assumption} \label{as: liu markov}
    The Markov chain $\qty{Y_n}$ has a unique invariant probability measure (i.e. stationary distribution), denoted by $d_\mathcal{Y}$.
\end{assumption}

\begin{assumption} \label{as: liu lr}
    The learning rates $\qty{\alpha_n}$ are positive, decreasing and satisfy
    \begin{align}
        \sum_{i=0}^\infty \alpha_i = \infty, \lim_{n \rightarrow \infty} \alpha_n = 0, \, \text{and} \, \frac{\alpha_n - \alpha_{n+1}}{\alpha_n} = \fO(\alpha_n).
    \end{align}
\end{assumption}

\begin{assumption} \label{as: liu Hc}
Let $H_c(x,y) \doteq \frac{1}{c}H(cx,y)$. There exists a measurable function $H_\infty(x,y) \doteq \lim_{c\rightarrow \infty} H_c(x,y)$, a scalar function $\kappa\colon \R\to\R$ (independent of $x,y$), and a measurable function $b(x,y)$ such that for any $x,y$:
\begin{align}
H_c(x,y) - H_\infty(x,y) &= \kappa(c)b(x,y), \label{eq:ass3a}\\
\lim_{c\to\infty}\kappa(c) &= 0. \label{eq:ass3b}
\end{align}
Moreover, there exists a measurable function $L_b(y)$ such that for all $x,x',y$,
\begin{align}
\bigl\lVert b(x,y) - b(x',y)\bigr\rVert
&\le L_b(y)\,\lVert x - x'\rVert, \label{eq: as lipschitz 3}
\end{align}
and the expectation,
\begin{align}
L_b \doteq \E_{y\sim d_y}\bigl[L_b(y)\bigr]
\end{align}
is well-defined and finite.
\end{assumption}

\begin{assumption} \label{as: liu lipschitz}
There exists a measurable function $L(y)$ such that for any $x,x',y$,
\begin{align}
\bigl\lVert H(x,y) - H(x',y)\bigr\rVert &\le L(y)\,\lVert x - x'\rVert,\label{eq:ass4a}\\
\bigl\lVert H_\infty(x,y) - H_\infty(x',y)\bigr\rVert &\le L(y)\,\lVert x - x'\rVert.\label{eq:ass4b}
\end{align}
Moreover, the following expectations are well‐defined and finite for every $x$:
\begin{align}
h(x)&\doteq\E_{y\sim d_y}\bigl[H(x,y)\bigr], \label{eq: E H}\\
h_\infty(x)&\doteq\E_{y\sim d_y}\bigl[H_\infty(x,y)\bigr], \label{eq: E H infty}\\
L&\doteq\E_{y\sim d_y}\bigl[L(y)\bigr]. \label{eq: E L}
\end{align}
\end{assumption}

\begin{assumption} \label{as: liu ode conv}
    As $c \rightarrow \infty$, $h_c(x)$ converges to $h_\infty(x)$ uniformly on $x$ on any compact subsets of $\R^d$. The ODE,
    \begin{align}
        \dv{x(t)}{t} = h_\infty(x(t))
    \end{align}
    has 0 as its G.A.S equilibrium.
\end{assumption}

\begin{assumption} \label{as: liu lil}

Let $g$ denote any of the following functions:
\begin{align}
y &\mapsto H(x,y)\quad(\forall x), \label{eq:ass6a}\\
y &\mapsto L_b(y), \label{eq:ass6b}\\
y &\mapsto L(y). \label{eq:ass6c}
\end{align}
Then for any initial condition $Y_1$, it holds that
\begin{align}
\lim_{n\to\infty}\alpha_n\sum_{i=1}^n\Bigl(g(Y_i)-\E_{y\sim d_y}[g(y)]\Bigr)
=0\quad\text{a.s.} \label{eq:ass6d}
\end{align}

\end{assumption}

\begin{lemma} \label{lem: liu ode convergence}
    Let Assumptions \ref{as: liu markov} - \ref{as: liu lil} hold. Then the iterates $\qty{x_n}$ generated by \eqref{eq: SA} converge almost surely to a (sample-path-dependent) bounded invariant set of the ODE
    \begin{align}
        \dv{x(t)}{t}= h(x(t)).
    \end{align}
\end{lemma}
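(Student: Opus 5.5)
Since Lemma~\ref{lem: liu ode convergence} is a restatement of Corollary 8 of \citet{liu2024ode}, I would follow the Borkar--Meyn ODE method adapted to Markovian noise. There are two stages. First I establish almost-sure boundedness, $\sup_n \norm{x_n} < \infty$, using Assumption~\ref{as: liu ode conv} on the ODE@$\infty$. Then, given boundedness, I show that the piecewise-linear interpolation of $\qty{x_n}$ is an asymptotic pseudo-trajectory of $\dot x = h(x)$. The Benaïm / Kushner--Clark limit-set theorem then yields convergence to a compact, sample-path-dependent invariant set. Throughout I write the update as
\begin{align}
x_{n+1} = x_n + \alpha_n h(x_n) + \alpha_n\bigl(H(x_n, Y_{n+1}) - h(x_n)\bigr),
\end{align}
and the core work is controlling the accumulated Markovian noise term.

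\textbf{Noise control.} Fix a time horizon $T$ and the index window $\qty{m, \dots, m(T)}$ with $\sum_{i=m}^{m(T)} \alpha_i \approx T$. I want
\begin{align}
\sup_{k \le m(T)} \Bigl\| \sum_{i=m}^{k} \alpha_i \bigl(H(x_i, Y_{i+1}) - h(x_i)\bigr) \Bigr\| \to 0 \quad \text{as } m \to \infty,
\end{align}
either a.s. or along the rescaled iterates. The steps are as follows.
\begin{enumerate}
\item Freeze the argument: replace $x_i$ by $x_m$ inside the window. The Lipschitz bound in Assumption~\ref{as: liu lipschitz}, with random constant $L(Y_{i+1})$, controls the error. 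Its averaged version $\sum_i \alpha_i L(Y_{i+1}) \approx L \sum_i \alpha_i$, by Assumption~\ref{as: liu lil} applied to $g = L$, so this error is $\fO(T)\cdot \sup_{\text{window}} \norm{x_i - x_m}$ and can be handled by a discrete Gronwall argument.
\item For the frozen term, apply summation by parts, writing $\alpha_i$ in terms of the partial sums $\sum_{j \le i}\bigl(g(Y_j) - \E g\bigr)$ with $g = H(x_m, \cdot)$. Assumption~\ref{as: liu lil} makes $\alpha_i$ times these partial sums vanish. The step-size regularity $(\alpha_i - \alpha_{i+1})/\alpha_i = \fO(\alpha_i)$ from Assumption~\ref{as: liu lr} makes the boundary and difference terms summable over the window, so the whole frozen term vanishes.
\end{enumerate}
A technical point is that Assumption~\ref{as: liu lil} is stated for each fixed $x$, while $x_m$ is random. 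I would handle this using the affine-in-$x$ structure, or a countable dense set together with the Lipschitz bound, to obtain a statement uniform on compacts.

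\textbf{Stability.} I follow Borkar--Meyn. Partition time into blocks of ODE-length $T$. At the start of each block, rescale by $r_m \doteq \max(1, \norm{x_m})$, giving iterates $\hat x_i = x_i / r_m$. These obey the same recursion with $H$ replaced by $H_{r_m}$. Assumption~\ref{as: liu Hc} lets me write
\begin{align}
H_{r_m} = H_\infty + \kappa(r_m)\, b,
\end{align}
with $b$ Lipschitz with constant $L_b(\cdot)$, whose average is again controlled by Assumption~\ref{as: liu lil}. Hence, on blocks where $r_m$ is large, the rescaled iterates track $\dot x = h_{r_m}(x)$, which converges uniformly on compacts to $h_\infty$ by Assumption~\ref{as: liu ode conv}. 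Because $0$ is G.A.S. for $h_\infty$, I choose $T$ so that every trajectory starting in the unit ball lies in the ball of radius $1/4$ after time $T$. It follows that whenever $\norm{x_m}$ is large, $\norm{x_{m(T)}} \le \tfrac12 \norm{x_m}$. Combined with the Gronwall bound on within-block growth, $\norm{x_i} \le C \norm{x_m}$, this forces $\sup_n \norm{x_n} < \infty$ a.s.

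\textbf{Main obstacle and conclusion.} I expect the main obstacle to be the noise estimate in the stability stage. There the noise must be controlled uniformly over the rescaled, random and possibly unbounded starting points, and with Markovian rather than martingale-difference noise. I would first try the Poisson-equation decomposition of the noise: for finite ergodic $\qty{Y_n}$, write $H(x, y) - h(x) = u_x(y) - (P u_x)(y)$. This is affine in $x$ with Lipschitz constants in $x$, so it splits into a martingale difference plus telescoping terms. The telescoping terms are absorbed using the step-size regularity, and summation by parts as in the noise-control step serves as a fallback. Once boundedness and the vanishing noise are in hand, the interpolated trajectory is an asymptotic pseudo-trajectory of $\dot x = h(x)$. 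Its limit set is therefore a compact, internally chain transitive, hence invariant, set, which gives the lemma.
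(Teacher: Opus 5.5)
The paper gives no proof of this lemma. It restates Corollary 8 of \citet{liu2024ode} and uses it as a black box. Your outline follows the architecture of that source: Borkar--Meyn rescaling for stability, vanishing asymptotic rate of change of the noise derived from Assumption~\ref{as: liu lil} by summation by parts, then a Kushner--Clark/Bena\"{\i}m limit-set theorem.

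\textbf{The gap: freezing at $x_m$ over the whole window.} Your noise-control step fails as written when it freezes the argument at $x_m$ over the entire window of ODE-length $T$. The freezing error is bounded by $\sum_{i=m}^{m(T)}\alpha_i\bigl(L(Y_{i+1})+L\bigr)\norm{x_i-x_m}$, and $\norm{x_i-x_m}$ is not small. Over ODE-time $T$ the iterates follow $\dot x=h(x)$ and move a distance of order $T\sup\norm{h}$ on the relevant compact set. So this error is of order $T^2$ and does not vanish as $m\to\infty$.

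A discrete Gronwall argument cannot absorb it. Gronwall controls the deviation of the iterates from the ODE trajectory, not their displacement from the window's starting point. If you instead freeze along the ODE trajectory, the frozen argument changes with $i$, and your summation by parts with a single $g=H(x_m,\cdot)$ no longer applies.

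\textbf{The fix: a second, finer discretization.}
\begin{enumerate}
\item Split the window into sub-windows of ODE-length $\delta$ and freeze at the left endpoint of each.
\item Bound each sub-window's freezing error by $\fO(\delta)\cdot\fO(\delta)$, using Assumption~\ref{as: liu lil} for $L(\cdot)$ and the within-window growth bound. The total over the window is then $\fO(T\delta)$.
\item Show the frozen sums vanish as $m\to\infty$, uniformly over the compact set of frozen points, by your countable-dense-set argument.
\item Only then send $\delta\to 0$.
\end{enumerate}
The same refinement is needed in your stability stage for the rescaled iterates, where $H_{r_m}=H_\infty+\kappa(r_m)b$.

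\textbf{The Poisson-equation route.} The route you would try first is not available under the lemma's hypotheses. Assumption~\ref{as: liu markov} gives only a unique invariant measure. It does not make $\fY$ finite, nor does it supply a drift condition guaranteeing a solution $u_x$. Even for finite $\fY$, controlling the martingale part $\sum_i\alpha_i M_{i+1}$ typically needs something like $\sum_i\alpha_i^2<\infty$, which Assumption~\ref{as: liu lr} does not provide; for example, $\alpha_n=n^{-1/2}$ satisfies it. Assumption~\ref{as: liu lil} is exactly what replaces that machinery in \citet{liu2024ode}. Summation by parts from Assumption~\ref{as: liu lil} should therefore be your main route, not the fallback.
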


\section{Derivation of n-step Differential TD} \label{ap: derivation}
We begin with the Bellman equation for differential TD in matrix-vector form \eqref{eq bellman equation}.
Recall that $P_\pi \in [0,1]^{\ns\times\ns}$ denotes the stochastic matrix of the Markov chain induced by the target policy $\pi$, $r_\pi \in \R^{\ns}$ represents the expected rewards under $\pi$, and $\bar{J}_\pi$ is the average reward.
The one-step Bellman equation of $v_\pi$ is
\begin{align}
    v_\pi = r_\pi - \bar{J}\pi e + P_\pi v_\pi.
\end{align}
We can keep unrolling it for $n$ steps and get
\begin{align}
     v_\pi &= r_\pi - \bar{J}_\pi e + P_\pi \qty(r_\pi - \bar{J}_\pi e  + P_\pi v_\pi)\\
      &= r_\pi - 2 \bar{J}_\pi e + P_\pi r_\pi + P_\pi^2 v_\pi \\
      &\quad \vdots\\
      &= -n \bar{J}_\pi e + r_\pi + P_\pi r_\pi + P_\pi^2 r_\pi + \cdots + P_\pi^{n-1} r_\pi + P_\pi^n v_\pi \label{eq: unroll v for n steps}
\end{align}
Hence, by~\eqref{eq: unroll v for n steps}, we have for all $s \in \fS$, 
\begin{align}
    v_\pi(s) =& \E\qty[\sum_{i = 0}^{n-1}\qty(r(S_{i}, A_{i}) - \bar{J}_\pi) + v_\pi(S_{n})\Bigg| 
    S_0 = s, A_{i} \sim \pi(\cdot \mid S_{i}), S_{i+1} \sim p(\cdot \mid S_i, A_i)]\\
    =& v_\pi(s) + \E\qty[\sum_{i = 0}^{n-1}\qty(r(S_{i}, A_{i}) - \bar{J}_\pi) + v_\pi(S_n) - v_\pi(S_0)\Bigg| 
    S_0 = s, A_i \sim \pi(\cdot \mid S_{i}), S_{i+1} \sim p(\cdot \mid S_{i}, A_{i})]\\
    =& v_\pi(s) + \E\qty[\rho_{0:n-1}\qty(\sum_{i = 0}^{n-1}\qty(r(S_{i}, A_{i}) - \bar{J}_\pi) + v_\pi(S_{n}) - v_\pi(S_0))\Bigg| 
    S_0 = s, A_{i} \sim \mu(\cdot \mid S_{i}), S_{i+1} \sim p(\cdot \mid S_{i}, A_{i})].
\end{align}

Therefore, we have the $n$-step bootstrapped differential TD update
\begin{align}
    v_{t+n}(S_t) = v_{t+n-1}(S_t) + \alpha_{t+n-1}\rho_{t:t+n-1}\qty(R_{t+1:t+n} - n J_{t+n-1} + v_{t+n-1}(S_{t+n}) - v_{t+n-1}(S_t))
\end{align}
where $J_t$ is the average reward estimate to be defined shortly.

Let $d_\mu \in [0,1]^\ns$ denote the stationary distribution induced by the behavior policy $\mu$.
Rearranging~\eqref{eq: unroll v for n steps}, we get
\begin{align}
    n \bar{J}_\pi e &= r_\pi + P_\pi r_\pi + P_\pi^2 r_\pi + \cdots + P_\pi^{n-1} r_\pi + P_\pi^n v_\pi - v_\pi \\
    \bar{J}_\pi e &= \frac{1}{n} \left( r_\pi + P_\pi r_\pi + P_\pi^2 r_\pi + \cdots + P_\pi^{n-1} r_\pi + P_\pi^n v_\pi - v_\pi \right) \\
    d_\mu^\top \bar{J}_\pi e &= \frac{d_\mu^\top}{n} \left( r_\pi + P_\pi r_\pi + \cdots + P_\pi^{n-1} r_\pi + P_\pi^n v_\pi - v_\pi \right) \\
    \bar{J}_\pi &= \frac{d_\mu^\top}{n} \left( r_\pi + P_\pi r_\pi + \cdots + P_\pi^{n-1} r_\pi + P_\pi^n v_\pi - v_\pi \right)\label{eq: J definition}
\end{align}
Therefore, by~\eqref{eq: J definition}, we have
\begin{align}
    \bar{J}_\pi 
    =& \frac{1}{n}\E\qty[\sum_{i = 0}^{n-1}r(S_{i}, A_{i}) + v_\pi(S_{n}) -v_\pi(S_0)\Bigg| 
    S_0 \sim d_\pi, A_{i} \sim \pi(\cdot \mid S_{i}), S_{i+1} \sim p(\cdot \mid S_{i}, A_{i})]\\
    =& \bar{J}_\pi + \frac{1}{n}\E\qty[\sum_{i = 0}^{n-1}\qty(r(S_{i}, A_i) - \bar{J}_\pi) + v_\pi(S_{n}) - v_\pi(S_0)\Bigg| 
    S_0 \sim d_\pi, A_{i} \sim \pi(\cdot \mid S_{i}), S_{i+1} \sim p(\cdot \mid S_{i}, A_{i})]\\
    =& \bar{J}_\pi + \frac{1}{n}\E\qty[\rho_{0:n-1}\qty(\sum_{i = 0}^{n-1}\qty(r(S_{i}, A_{i}) - \bar{J}_\pi) + v_\pi(S_{n}) - v_\pi(S_0))\Bigg| 
    S_0 \sim d_\mu, A_{i} \sim \mu(\cdot \mid S_{i}), S_{i+1} \sim p(\cdot \mid S_{i}, A_{i})].
\end{align}

As a result, we update the average reward estimate as
\begin{align}
    J_{t+n}= J_{t+n-1} + \frac{\eta}{n}\alpha_{t+n-1}\rho_{t:t+n-1}\qty(R_{t+1:t+n} - n J_{t+n-1} + v_{t+n-1}(S_{t+n}) - v_{t+n-1}(S_t)),
\end{align}
where $\eta$ is a positive multiplicative constant to allow for a different update rate relative to $v$.


\section{Technical Lemmas}

\begin{lemma}\label{lem:Hc}
    The function $H(v,y)$ defined in~\eqref{eq: H def} satisfies Assumption~\ref{as: liu Hc}.
\end{lemma}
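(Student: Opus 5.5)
The plan is to compute $H_c$ explicitly from \eqref{eq: H def} and split it into a part that is linear in $x$ plus a part that scales like $1/c$. For $c>0$ and any $x\in\R^{\ns}$, $y=(s_0,a_0,\dots,s_n)\in\fY$, we have
\begin{align}
H_c(x,y)[s] = \tfrac{1}{c}H(cx,y)[s] &= \rho_{0:n-1}(y)\Bigl(\tfrac{1}{c}\textstyle\sum_{k=0}^{n-1} r(s_k,a_k) - \eta\sum x + x(s_n) - x(s_0)\Bigr)\mathbb{I}\qty{s=s_0}.
\end{align}
This suggests the natural choices
\begin{align}
H_\infty(x,y)[s] &\doteq \rho_{0:n-1}(y)\bigl(-\eta \textstyle\sum x + x(s_n)-x(s_0)\bigr)\mathbb{I}\qty{s=s_0},\\
b(x,y)[s] &\doteq \rho_{0:n-1}(y)\textstyle\sum_{k=0}^{n-1} r(s_k,a_k)\,\mathbb{I}\qty{s=s_0},\qquad \kappa(c)\doteq \tfrac{1}{c}.
\end{align}
With these, $H_c(x,y)-H_\infty(x,y)=\kappa(c)b(x,y)$ holds identically, which is \eqref{eq:ass3a}, and $\kappa(c)\to 0$ gives \eqref{eq:ass3b}. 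This also shows that the limit $H_\infty$ exists pointwise. Measurability is automatic, since $\fY$ is finite and every function involved is continuous in $x$.

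For the Lipschitz requirement \eqref{eq: as lipschitz 3}, note that $b(x,y)$ does not depend on $x$ at all. Hence $\norm{b(x,y)-b(x',y)}=0$, so we may take $L_b(y)\equiv 0$, and $L_b=\E_{y\sim d_\fY}[L_b(y)]=0$ is trivially finite. If a strictly positive constant is preferred, any constant also works, because $\fY$ is finite.

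The only point that needs care is checking that everything is well defined and bounded. The importance ratio $\rho_{0:n-1}(y)$ must be finite wherever it appears. This follows from the coverage condition in Assumption \ref{as: markov}: every $y$ in the support of the $\mu$-induced chain has $\mu(a_k|s_k)>0$. Finiteness of $\fS$, $\fA$ then gives $\max_y \rho_{0:n-1}(y)<\infty$ and $\max_{s,a}\abs{r(s,a)}<\infty$. I do not expect a genuine obstacle here. The step most worth stating explicitly is restricting $\fY$ to tuples with positive probability under $\mu$, so that $\rho$ never involves division by zero.
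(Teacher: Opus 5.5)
Your proposal is correct and essentially matches the paper's proof: you compute $H_c$ explicitly, take the same $H_\infty$, $b$ and $\kappa(c)=1/c$, and set $L_b\equiv 0$ because $b$ does not depend on $x$. Your extra remarks are harmless additions. These are the measurability observation and restricting $\fY$ to $\mu$-reachable tuples so that $\rho_{0:n-1}$ is finite; the paper leaves both implicit and bounds $\rho$ only in the subsequent Lipschitz lemma.
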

\begin{proof}
    Recall that for a scalar $c > 0$, the scaled operator is defined as $H_c(x,y) \doteq \frac{1}{c} H(cx, y)$. Substituting the definition of $H$ from \eqref{eq: H def}, we obtain
\begin{align}
H_c(v,y)[s] = \rho_{0:n-1}(y)\qty( \frac{1}{c} \sum_{k=0}^{n-1} r(s_k, a_k) - \eta \sum v + v(s_n) - v(s_0))\mathbb{I}\qty{s = s_0}.
\end{align}
This implies that the limit of the operator $H_\infty(v, y)= \lim_{c \rightarrow \infty} H_c(v,y)$ is given by
\begin{align}
H_\infty(v,y)[s] \doteq \rho_{0:n-1}(y)\qty( - \eta \sum v + v(s_n) - v(s_0) )\mathbb{I}\{s = s_0\}, \label{eq: H infty}
\end{align}
and the difference $H_c(v,y) - H_\infty(v,y)$ simplifies to
\begin{align}
H_c(v,y)[s] - H_\infty(v,y)[s] = \rho_{0:n-1}(y)\qty( \frac{1}{c} \sum_{k=0}^{n-1} r(s_k, a_k) )\mathbb{I}\{s = s_0\}.
\end{align}
Therefore, Assumption~\ref{as: liu Hc} \eqref{eq:ass3b} is satisfied by setting $\kappa(c) \doteq \frac{1}{c}$, which vanishes as $c \to \infty$, and defining
\begin{align}
b(x,y)[s] \doteq \rho_{0:n-1}(y) \qty( \sum_{k=0}^{n-1} r(s_k, a_k) ) \mathbb{I}\{s = s_0\}.
\end{align}
Importantly, $b(x,y)$ is independent of $x$, so for all $x,x',y$ we have $\|b(x,y) - b(x',y)\| = 0$. Thus, the Lipschitz condition in \eqref{eq: as lipschitz 3} from Assumption~\ref{as: liu Hc} is trivially satisfied with $L_b(y) = 0$, and the expectation $\E[L_b(y)]$ is finite.

\end{proof}

\begin{lemma}\label{lem:H-lipschitz}
    The function $H(v,y)$ defined in~\eqref{eq: H def} satisfies Assumption~\ref{as: liu lipschitz}.
\end{lemma}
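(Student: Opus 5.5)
The plan is to produce an explicit measurable $L(y)$ that controls both $H$ and $H_\infty$, by exploiting that for fixed $y$ both maps are affine in $v$ with the same linear part. Write $y=(s_0,a_0,\dots,s_n)$. From \eqref{eq: H def} and \eqref{eq: H infty}, for any $v,v'$ the reward terms cancel in the difference, and
\begin{align}
H(v,y)-H(v',y) = H_\infty(v,y)-H_\infty(v',y) = \rho_{0:n-1}(y)\bigl(-\eta\, e^\top u + u(s_n)-u(s_0)\bigr)\mathbf{e}_{s_0},
\end{align}
where $u\doteq v-v'$ and $\mathbf{e}_{s_0}$ is the indicator vector of $s_0$. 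Consequently, a single bound on the right-hand side will verify \eqref{eq:ass4a} and \eqref{eq:ass4b} together.

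Next, bound each term of the scalar factor. Cauchy--Schwarz gives $\abs{e^\top u}\le \sqrt{\ns}\,\norm{u}$, and $\abs{u(s_n)}$ and $\abs{u(s_0)}$ are each at most $\norm{u}$. The indicator vector has norm one, so
\begin{align}
\norm{H(v,y)-H(v',y)} \le \rho_{0:n-1}(y)\bigl(\eta\sqrt{\ns}+2\bigr)\norm{v-v'}.
\end{align}
Set $L(y)\doteq \rho_{0:n-1}(y)(\eta\sqrt{\ns}+2)$. This function is measurable, since $\fY$ is finite.

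Finally, check that the expectations are finite. Assumption \ref{as: markov} gives the coverage condition, so $\mu(a|s)>0$ whenever $\rho(s,a)$ appears along a trajectory with positive probability under $\mu$. There are only finitely many state-action pairs, so $\rho_{\max}\doteq\max\{\rho(s,a):\mu(a|s)>0\}<\infty$ and $L(y)\le \rho_{\max}^n(\eta\sqrt{\ns}+2)$. For each fixed $x$, $H(x,\cdot)$ and $H_\infty(x,\cdot)$ are bounded functions on the finite set $\fY$. The expectations \eqref{eq: E H}, \eqref{eq: E H infty}, and \eqref{eq: E L} are therefore finite sums of finite terms, and are well defined.

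The only mildly delicate point is that $\rho$ is undefined wherever $\mu(a|s)=0$. I would handle it by restricting $\fY$ to tuples with positive probability under $d_\fY$, or equivalently by setting $\rho\doteq 0$ there; neither choice changes any expectation.
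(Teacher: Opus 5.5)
Your proof is correct and matches the paper's: both show that $H(\cdot,y)$ and $H_\infty(\cdot,y)$ have identical increments, bound the scalar factor term by term, and bound $\rho_{0:n-1}(y)$ uniformly using finiteness of $\fS\times\fA$. Both also note that every expectation is a finite sum over $\fY$. The differences are cosmetic: you work directly in the $\ell_2$ norm that Assumption~\ref{as: liu lipschitz} uses, whereas the paper bounds in $\norm{\cdot}_\infty$ and tacitly relies on norm equivalence; and $\mu(a|s)>0$ along sampled trajectories holds automatically, not as a consequence of coverage.
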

\begin{proof}
We first verify that $H$ is Lipschitz continuous in the $\infty-$norm, i.e. \eqref{eq:ass4a}. 
Fix any transition $y = (s_0,a_0,s_1, \dots s_n)$ and any state $s$. Using the indicator $\qty (\mathbb{I}\qty{s=s_0})$, we have
\begin{align}
H(v)[s] - H(w)[s]
&= \rho(y)\Bigl[\qty(\sum_{k=0}^{n-1} r\qty(s_k, a_k)-\eta \sum v + v(s_n)-v(s_0)) \\
&\quad-\qty(\sum_{k=0}^{n-1} r\qty(s_k, a_k)-\eta \sum w + w(s_n)-w(s_0))\Bigr]\mathbb{I}\qty{s=s_0}\\
&=
\begin{cases}
\rho(y)\bigl[-\eta \sum(v-w) + (v(s_n)-w(s_n)) - (v(s_0)-w(s_0))\bigr] , & s = s_0,\\
0, & s \neq s_0.
\end{cases}
\end{align}
Hence
\begin{align}
\bigl|H(v)[s]-H(w)[s]\bigr|
&\le
\begin{cases}
\rho(y)\bigl[\eta \ns \norm{v-w}_\infty + 2\,\norm{v-w}_\infty\bigr], & s = s_0,\\
\norm{v-w}_\infty, & s \neq s_0.
\end{cases}
\end{align}

The only remaining task is to upper-bound $\rho(y)$ defined in \eqref{eq: H def}.
Under our standard “coverage’’ assumption (Assumption \ref{as: markov}), whenever $\pi(a|s) >0$, we also have $\mu(a|s)>0$, and because both $\ns$ and $\na$ are finite, there is a uniform lower bound
\begin{align}
    \mu_{\min} &= \min_{s,a:\, \pi(a\mid s)>0}\mu(a\mid s)>0.
\end{align}
This implies that
\begin{align}
\rho(y)
=\prod_{k=0}^{n-1}\frac{\pi(a_k\mid s_k)}{\mu(a_k\mid s_k)}
\le \qty(\frac{1}{\mu_{\min}})^{n}
\doteq \bar\rho<\infty.
\end{align}
Therefore,
    \begin{align}
        \norm{H(v,y)-H(w,y)}_\infty &= \max_{s \in \fS} \abs{H(v,y)[s] - H(w,y)[s]} \\ &\leq L \norm{v-w}_\infty,
    \end{align}
    where $L \doteq \max\qty{1,\bar\rho\qty(\eta \ns +2)}.$

Verifying the Lipschitz continuity of $H_\infty$, starting from \eqref{eq: H infty} from the proof of Lemma \ref{lem:Hc} to avoid repetition, we have,
\begin{align}
    H_{\infty}(v)[s] - H_{\infty}(w)[s]
    &= \rho(y)\Bigl[\qty(-\eta \sum v + v(s_n)-v(s_0))-\qty(-\eta \sum w + w(s_n)-w(s_0))\Bigr]\mathbb{I}\qty{s=s_0}\\
    &=
    \begin{cases}
    \rho(y)\bigl[-\eta \sum(v-w) + (v(s_n)-w(s_n)) - (v(s_0)-w(s_0))\bigr] , & s = s_0,\\
    0, & s \neq s_0. 
    \end{cases} \\
    &= H(v)[s] - H(w)[s]
\end{align}
Therefore, the Lipschitz continuity of $H_\infty$ holds for $L$ as well, proving \eqref{eq:ass4b} holds. 

From equation \eqref{eq: h}, its easy to see that $h$ is finite, verifying \eqref{eq: E H}. To verify \eqref{eq: E H infty}, its also easy to see that 
\begin{align}
\E_{y\sim d_{\fY}}\!\bigl[ H_\infty(v,y)[s]\bigr]
&=\E_{\mu}\!\Bigl[
      \rho_{0:n-1}(y)\,
      \bigl(-\eta\!\textstyle\sum_{p}v(p)+v(s_n)-v(s_0)\bigr)
      \mathbf \mathbb{I}\{s=s_0\}
   \Bigr]\\
&=d_\mu(s)\,
   \E_{\pi}\!\Bigl[
      -\eta\,e^{\!\top}v
      +v(s_n)-v(s)
      \;\Bigm|\;s_0=s
   \Bigr]\\
&=d_\mu(s)\Bigl[
      -\eta\,e^{\!\top}v
      +(P_\pi^{\,n}v)(s)-v(s)
   \Bigr]\\
&=\bigl[D_\mu\bigl(P_\pi^{\,n}-I-\eta\,ee^{\!\top}\bigr)v\bigr](s).
\end{align}
In vector form,
\begin{align}
\E_{y\sim d_{\fY}}\bigl[ H_\infty(v,y)\bigr]
=D_\mu\bigl(P_\pi^{\,n}-I-\eta\,ee^{\!\top}\bigr)v,
\end{align}
which is clearly finite. Finally, since the Lipschitz constant $L$ is independent of $y$, \eqref{eq: E L} holds trivially.
\end{proof}

\section{Experimental Details and Additional Results} \label{ap: exp details}
\subsection{Experimental Details} \label{sec: exp description}
 Our experiments were carried out in a simple, continuing gridworld. At each timestep $t$ the agent occupies one cell in an $5\times 5$ grid, can move to any of the four orthogonal neighbors (subject to walls at the borders where an illegal move keeps the agent in the same state), and receives a unit reward each time it reaches the designated “goal” state in the bottom right corner.  Rather than terminating, after reaching the goal state, the agent is put back to the start corner (top left) on the very next step with probability 1, making this a continuing task.

We evaluated an off-policy, multi-step differential TD learner for a fixed $\epsilon$-greedy target policy with $\epsilon = 0.1$.  The behavior policy is random. The value function and reward estimate were both initialized to zero. Throughout training, the agent took actions following the random behavior policy, observed the deterministic next state and reward, and performed its $n$-step TD updates with Off-Policy Differential TD to estimate the value function associated with the target policy. The experiment was run for $100,000$ steps, and repeated with 30 random seeds. For Figure \ref{fig: eta}, the $\eta$ values were reported for $\eta = \qty{0.1, 0.5, 1, 2}$, while keeping $n=3$. In Figure \ref{fig: n}, $\eta = 0.1$ and $n = \qty{1,2,3,4}$. In both experiments, we used a constant learning rate $\alpha = 0.01$. 

We now elaborate on the evaluation metric. The convergence was assessed using a variant of root-mean-squared value error from \citep{tsitsiklis1999average} which we denote as `RMSVE (TVR)', which is also used in \citep{wan2020learning}. As noted in Section \ref{sec: background}, the solutions to the differential Bellman equation \eqref{eq bellman equation} form a set $\fV_* = \qty{v_\pi + ce}$. Which point in this set an algorithm converges to depends on initializations and the design choices of the algorithm. Therefore, computing the value error with respect to $v_\pi$ does not say much about convergence. To remedy this, \citep{tsitsiklis1999average} proposed computing the error
with respect to the nearest valid solution to the Bellman equations. The metric is defined as
\begin{align}
    \text{RMSVE(TVR)} (v, v_\pi) \doteq \inf_{c}\norm{v-(v_\pi + ce)}_{d_\pi}.
\end{align}
Algorithmically, this amounts to computing the offset of the learned value function, subtracting it, and then computing the RMSVE with respect to $v_\pi$.
As demonstrated in Section C.4 of \citet{wan2020learning} this $v_\pi$ can be analytically computed using the Bellman equations with the additional constraint that $d_\pi^\top v_\pi = 0$ (effectively centering the value function).

\subsection{Additional Experiment on the effect of $n$}
Here, we present additional experiments, demonstrating that the conclusions from Section \ref{sec: challenges} also hold for various $n$ values. Despite the fact that $\eta_0 = 0$, we empirically observe the convergence of differential TD with $\eta=0.1$ under several choices of $n$. In this experiment, we used the environment described in Section \ref{sec: exp description}.
\begin{figure}[htbp]

\centering
\includegraphics[width=0.4\linewidth]{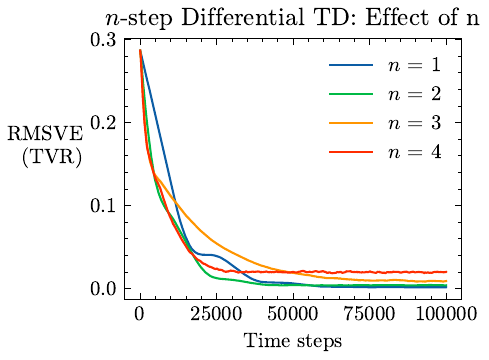}
\caption{Off‐policy convergence of $n$–step differential TD in a $5\times5$ gridworld for various $n$ with fixed $\eta = 0.1$. See Section \ref{sec: exp description} for the complete experiment description. Despite $\eta_0 =0$, we still observe the convergence of differential TD with across several $n$ values.
}
\label{fig: n}
\end{figure}

\end{document}